\newtheorem{theorem}{Theorem} %
\newtheorem{corollary}[theorem]{Corollary} %
\newtheorem{lemma}[theorem]{Lemma}
\theoremstyle{definition} %
\newtheorem{example}{Example}
\renewcommand{\eqref}[1]{(\ref{eq:#1})}
\DeclarePairedDelimiterX\set[1]\lbrace\rbrace{#1}
\newcommand{\FF}{\mathbb{F}}
\DeclareMathOperator{\Var}{Var}
\DeclareMathOperator{\Cov}{Cov}
\newcommand{\pa}{\mathrm{pa}}
\newcommand{\htr}{\mathrm{htr}}
\newcommand{\bA}{\mathbf{A}}
\newcommand{\bb}{\mathbf{b}}
\newcommand{\bM}{\mathbf{M}}
\newcommand{\bX}{\mathbf{X}}
\DeclareMathOperator{\cl}{cl} %
\newcommand{\cM}{\ensuremath{\mathcal M}}
\newcommand{\algM}{\ensuremath{\overline{\mathcal M}}} %
\newcommand{\cN}{\ensuremath{\mathcal N}} %
\title{Efficiently Deciding Algebraic Equivalence of Bow-Free Acyclic Path Diagrams}
\author[1]{\href{mailto:<t.vanommen@uu.nl>?Subject=Your UAI 2024 paper}{Thijs~van~Ommen}{}}
\affil[1]{%
    Information and Computing Sciences\\
    Utrecht University\\
    Utrecht, The Netherlands
}
\begin{document}
\maketitle

\begin{abstract}
  For causal discovery in the presence of latent confounders, constraints beyond conditional independences exist that can enable causal discovery algorithms to distinguish more pairs of graphs.
  Such constraints are not well-understood yet.
  In the setting of linear structural equation models without bows, we study algebraic constraints and argue that these provide the most fine-grained resolution achievable.
  We propose efficient algorithms that decide whether two graphs impose the same algebraic constraints, or whether the constraints imposed by one graph are a subset of those imposed by another graph.
\end{abstract}

\section{Introduction}\label{sec:intro}

Causal discovery is the problem of learning a causal graph from data. This is a difficult problem for many reasons, including the danger of drawing wrong conclusions due to noisy data, the superexponential size of the search space, and the fact that some graphs are just indistinguishable based on data alone.

A further complication is that in many situations, we cannot safely assume \emph{causal sufficiency}: the assumption that we have measurements of all variables that are relevant for explaining the statistical relations we see in the data. A \emph{latent confounder} is a variable that is not observed, but is a cause of two or more observed variables. If we fail to take the possible existence of latent confounders into account, we would wrongly try to explain the statistical relation between the observed variables in terms of causal relations between them, when in fact there might not be such relations.

For a graph without latent variables, its statistical model can be fully described by a list of (conditional) independences that must hold between the variables. Thus, looking for such independences in the data will allow us to differentiate between any pair of graphs that we could theoretically distinguish.
For types of graphs that allow latent variables, this is no longer enough, as new constraints such as the Verma constraint \citep{Robins1986_VermaConstraint,VermaPearl1991_VermaConstraint} may be imposed on the statistical model. Taking such constraints into account could help us distinguish between more graphs.

In this paper, we study \emph{algebraic} constraints arising in linear structural equation models for a class of graphs known as bow-free acyclic path diagrams. In particular, we are interested in the following question: given two bow-free graphs, are they distinguishable based on algebraic constraints? Two graphs that are indistinguishable in this way are called \emph{algebraically equivalent} \Citep{VanOmmenMooij2017_AlgebraicEquivalence}.

An algorithm that answers this question efficiently would have many applications. For example, in a score-based causal discovery search, it could be used to avoid the expensive operation of scoring a graph that is equivalent to one we have already seen. Also, when evaluating the performance of a causal discovery method on simulated data, we often face the problem that the algorithm might output a single graph as representative of an equivalence class, and to assess this output, we need to know if the output graph is algebraically equivalent to the graph from which the data were simulated. The algorithms we propose can be used for these purposes.

The rest of this paper is structured as follows. After discussing related work in Section~\ref{sec:related_work} and preliminaries in Section~\ref{sec:prelim}, we will define efficient algorithms in Section~\ref{sec:algorithms}.\footnote{An implementation of these algorithms can be found at \url{https://github.com/UtrechtUniversity/aelsem_decide}.} These algorithms can decide whether a given graph imposes a given algebraic constraint; whether one graph imposes all the algebraic constraints that another one imposes; and whether two graphs are algebraically equivalent.
In Section~\ref{sec:equivalence_relations}, we discuss other equivalence relations that could be used for causal discovery, and argue that for bow-free acyclic path diagrams, algebraic equivalence might be the most appropriate. We also prove some necessary and sufficient conditions for algebraic equivalence in Section~\ref{sec:nec_suff_conds}.
Finally, Section~\ref{sec:experiments} describes some small experiments, and a discussion and conclusion are in Sections~\ref{sec:discussion} and~\ref{sec:conclusion}.

\subsection{Related Work}\label{sec:related_work}

\citet{ClaassenBucur2022_GESwithPAGs} present an algorithm that decides Markov equivalence, i.e.~the more coarse-grained notion that only takes conditional independences into account. This algorithm is very fast ($O(n)$) for sparse graphs. For general graphs, it is $O(n^4)$, which is similar to our algorithms.

For algebraic equivalence, no efficient algorithms exist yet.
\citet{NowzohourMEB2017_EJS} test `empirical equivalence' by computing the maximum likelihood scores of two graphs and calling them equivalent if these scores are within some tolerance. Scoring a graph is an expensive operation requiring iterative optimization algorithms even for linear structural equation models \citep{DrtonEichlerRichardson2009_RICF}, and the result is not reliable due to numerical inaccuracy and because the likelihood may have spurious local maxima \citep{DrtonRichardson2004_LikelihoodMultimodality}. We include an experimental comparison to this method in Section~\ref{sec:experiments}.

None of these methods can be used to decide whether one model contains another, in the sense that all algebraic constraints imposed by one are also imposed by the other. Our Algorithm~\ref{alg:decide_inclusion} in Section~\ref{sec:decide_inclusion} can answer this question for two bow-free acyclic path diagrams, which may be useful in its own right.

Our algorithms may also be applicable to discrete and nonparametric models. The relevant notion of equivalence in this case is \emph{nested Markov equivalence}, a refinement of Markov equivalence. We present a partial result on this in Section~\ref{sec:markov_nestedmarkov}.

\section{Preliminaries}\label{sec:prelim}

Graphical models are useful for modelling the statistical relations between a set of variables, and more specifically also for modelling causal relations \citep{Pearl2000}. The most basic class of graphs used for this purpose is that of \emph{directed acyclic graphs (DAGs)}. A DAG $G$ consists of a set of nodes $V$ and a set of directed edges $E$ which do not form directed cycles $v \rightarrow \ldots \rightarrow v$. Interpreted causally, the presence of a directed path $v \rightarrow \ldots \rightarrow w$ in $G$ indicates that random variable $X_v$ is a \emph{cause} of $X_w$: an external intervention on $X_v$ is expected to lead to a change in the distribution of $X_w$.

\emph{Directed mixed graphs (DMGs)} have been used to model the presence of latent confounders without including them explicitly as extra variables in the model, first by  \citet{Wright1921_CorrelationAndCausation}. These graphs have \emph{bidirected edges} in addition to directed ones. A bidirected edge $v \leftrightarrow w$ indicates the existence of a latent variable that is a cause of both $X_v$ and $X_w$. A DMG with no directed cycles is called an \emph{acyclic DMG (ADMG)}. An ADMG is called a \emph{bow-free acyclic path diagram (BAP)} if it also does not contain a \emph{bow}, which is the co-occurrence of a directed edge $v \to w$ and a bidirected edge $v \leftrightarrow w$ between a single pair of nodes. In other words, BAPs are \emph{simple} ADMGs, i.e.~they have no multiple edges.

A \emph{linear structural equation model (LSEM)} is a model on a set of real-valued random variables $\set{X_v \mid v \in V}$ by means of a DMG $G$, describing their joint distribution via
\begin{equation*}
  X_v = \lambda_{0v} + \sum_{w \in \pa_G(v)} \lambda_{wv} X_w + \epsilon_v.
\end{equation*}
Here, $\pa_G(v)$ denotes the set of parents of $v$ in the graph $G$: those vertices $w$ that have a directed edge to $v$. The $\epsilon$'s are noise terms, which have $\Var(\epsilon_v) = \omega_{vv}$, and for $v \neq w$ must have $\Cov(\epsilon_v, \epsilon_w) = 0$ unless there is a bidirected edge between $v$ and $w$; then $\Cov(\epsilon_v, \epsilon_w) = \omega_{vw}$. The $\lambda$'s and $\omega$'s are parameters of the model. Dropping the intercepts $\lambda_{0\cdot}$ because they have no influence on $\Sigma = \Cov(\mathbf{X})$, the parameters can be represented as matrices $\Lambda$ and $\Omega$, which may have nonzero entries only in the following places: $\Lambda_{vw}$ is allowed to be nonzero if there is a directed edge from $v$ to $w$ in $G$, and $\Omega_{vw}$ can be nonzero if $v=w$ or there is a bidirected edge between $v$ and $w$. Being a covariance matrix, $\Omega$ must be symmetric and positive definite. We will only consider graphs without directed cycles in this paper; for such graphs, $(I - \Lambda)$ is always invertible. %

The noise terms $\epsilon_v$ are often assumed to be Gaussian, but this assumption is not necessary for the theory developed in this paper because we will look at the data only through the covariance matrix $\Sigma$. This does mean that if the data is not Gaussian, we ignore information present in higher-order moments. This information is potentially valuable: \citet{WangDrton2023_HigherMomentsToLearnBAPs} show that if the distributions are sufficiently non-Gaussian, all BAPs can be distinguished from each other using higher-order moments.
These moments can be captured in tensors and analyzed algebraically; see e.g.~\citep{AmendolaDrtonGrosdosHomsRobeva2023_ThirdOrderMomentVarieties}.

For parameters $\Lambda,\Omega$, we can compute $\Sigma = \Cov(\mathbf{X})$ as
\begin{equation}\label{eq:param_map}
  \Sigma = \phi(\Lambda, \Omega) = (I - \Lambda)^{-T} \Omega (I - \Lambda)^{-1},
\end{equation}
where $^{-T}$ denotes the transposed inverse; see e.g.~\citep{FoygelDraismaDrton2012_htc}.
Now we can define the \emph{model} $\cM(G)$ of a graph as
\begin{equation*}
  \cM(G) = \set{\phi(\Lambda,\Omega) \mid \text{$\Lambda$ and $\Omega$ compatible with $G$}}.
\end{equation*}

The parameterization map $\phi$ can also be understood graphically using the concept of a \emph{trek}, which is a path without colliders (i.e.~two consecutive edges along a trek do not both have an arrowhead into the node between them on the path). Equivalently, a trek consists of any number of directed edges traversed in the backward direction, then optionally a bidirected edge, then any number of directed edges traversed in the forward direction. The \emph{trek rule} is
\begin{equation}\label{eq:trekrule}
  \sigma_{vw} = \sum_{\substack{\text{treks $\tau$}\\\text{between $v$ and $w$}}} \Big( \prod_{x \leftarrow y \in \tau} \lambda_{yx} \cdot \omega_{\tau} \cdot \prod_{x \to y \in \tau} \lambda_{xy} \Big),
\end{equation}
where $\omega_\tau = \omega_{xy}$ if $x \leftrightarrow y \in \tau$; otherwise $\omega_\tau = \omega_{cc}$ where $c$ is the unique node in $\tau$ with no incoming edges.

Similar to treks, a \emph{half-trek} from $v$ to $w$ is either a directed path from $v$ to $w$, or a bidirected edge $v \leftrightarrow x$ followed by a directed path from $x$ to $w$. We write $w \in \htr(v)$ if $w$ is reachable by a half-trek from $v$. The \emph{half-trek criterion (HTC)} of \citet{FoygelDraismaDrton2012_htc} will play a role in our theory. A graph satisfying this criterion is called \emph{HTC-identifiable}. All BAPs are HTC-identifiable; many ADMGs and some DMGs are HTC-identifiable as well.
\Citeauthor{FoygelDraismaDrton2012_htc} present an algorithm that, given an HTC-identifiable graph $G$ and a $\Sigma \in \cM(G)$, will almost always find parameters $\Lambda$ and $\Omega$ for $G$ such that $\Sigma = \phi(\Lambda, \Omega)$.

We are motivated by the problem of \emph{causal discovery}: we want to use data sampled from $\bX$ to learn which graph is behind the data-generating process.
In practice, we often are unable to distinguish between several graphs that can explain the data equally well because they are \emph{distributionally equivalent}: $\cM(G) = \cM(G')$.

As $\Sigma$ is defined by polynomials, also $\cM(G)$ can be described as the set of all positive definite $\Sigma$ that satisfy some polynomial equalities ($f_i(\Sigma) = 0$) and inequalities ($g_i(\Sigma) > 0$) (or $\cM(G)$ may be the union of finitely many such sets). Such objects are studied in algebraic geometry \citep{CoxLittleOShea2015}. A useful simplification is to drop all inequality constraints, %
thus allowing some $\Sigma$ that were not in $\cM(G)$. The result is called the \emph{algebraic model} and written $\algM(G)$.
We will see in Section~\ref{sec:distributional_equivalence} that for BAPs, the difference between $\cM(G)$ and $\algM(G)$ is very small.
The retained polynomial equalities are also called \emph{algebraic constraints}.
If a model satisfies algebraic constraints $f_1$ and $f_2$, we see it also satisfies $f_1 + f_2$ and $g \cdot f_1$, where $g$ can be any polynomial. A set of polynomials that is closed under these operations is called an \emph{ideal}, and the smallest ideal containing some set of polynomials $f_1,\ldots,f_k$ is said to be \emph{generated by} that set.
Two graphs $G$ and $G'$ are called \emph{algebraically equivalent} if $\algM(G) = \algM(G')$ \Citep{VanOmmenMooij2017_AlgebraicEquivalence}.

We list some examples of algebraic constraints to illustrate their generality:
\begin{description}\label{desc:constraint_types}
\item[Vanishing correlation] The polynomial is simply $\sigma_{vw}$. For multivariate Gaussians, $\sigma_{vw} = 0$ is equivalent to marginal independence.
\item[Vanishing partial correlation] The partial correlation $\rho_{vw\cdot S}$ between $v$ and $w$ controlling for $S$ is zero iff the numerator $\lvert \Sigma_{\set{v} \cup S, \set{w} \cup S} \rvert$ in its definition is zero. This determinant is a polynomial in $\Sigma$. For multivariate Gaussians, this polynomial vanishes iff $v$ and $w$ are conditionally independent given $S$.
\item[Vanishing minor constraints] Generalizing the above, \citet{SullivantTalaskaDraisma2010} consider constraints of the form $\lvert \Sigma_{A,B} \rvert$ for arbitrary minors of $\Sigma$, and give a graphical characterization for such constraints in terms of t-separation, which generalizes the well-known d-separation.
\item[Graphically representable constraints] \Citet{VanOmmenDrton2022_GraphicalConstraints} show that many constraints arising in LSEMs can be expressed as determinants of matrices constructed from $\Sigma$, with each entry in this matrix being either $\sigma_{vw}$ or $0$. These matrices may be larger than $n \times n$, the size of $\Sigma$. The zero/nonzero pattern of the matrix can be thought of as the adjacency matrix of a bipartite graph. These `graphical representations' give these constraints their name.
\end{description}

\subsection{The Graphically Represented Ideal}\label{sec:graphical_ideals}

For a given graph $G$, we would like to have a set of algebraic constraints that together generate the ideal of $\algM(G)$. This task can be done by methods from algebraic geometry \citep{CoxLittleOShea2015}, but these are very slow, possibly taking hours even for graphs with 4 or 5 nodes. %
\Citet{VanOmmenDrton2022_GraphicalConstraints} outline a procedure that, given an HTC-identifiable graph, outputs a list of graphical representations of constraints.
For a BAP with $n$ vertices and $m$ edges, this is a list of $\binom{n}{2} - m$ constraints, i.e.~one per pair of nonadjacent nodes.
We will call the ideal generated by these constraints the \emph{graphically represented ideal}. %
These ideals do not always describe the algebraic model perfectly: they may have \emph{spurious components} which allow the existence of sets of $\Sigma$'s that satisfy the graphically represented constraints, yet are not in the algebraic model. If no such spurious $\Sigma$'s are positive definite, the ideal is called \emph{PD-primary}; if the spurious $\Sigma$'s do not include the identity matrix, the ideal is called \emph{$I$-primary}. For general graphs, the graphically represented ideal may fail to be PD- or $I$-primary. We illustrate this by Examples~\ref{ex:non_PD_primary} and~\ref{ex:non_I_primary} below, where we see spurious $\Sigma$'s for two graphs. Additional discussion of these examples can be found in Appendix~\ref{app:examples_details}.

\begin{example}\label{ex:non_PD_primary}
  The graph in Figure~\ref{fig:nonprimary}(a) is a BAP and its graphically represented ideal is $I$-primary. It is not PD-primary: the ideal permits
  \begin{equation*}
    \Sigma = \begin{bmatrix}
      1 & 3/4 & 2/9 & 0 & 0\\
      3/4 & 1 & 3/4 & 0 & 0\\
      2/9 & 3/4 & 1 & 0 & 0\\
      0 & 0 & 0 & 1 & 1/2\\
      0 & 0 & 0 & 1/2 & 1
    \end{bmatrix},
  \end{equation*}
  which is positive definite but clearly not in the model, as it has $\sigma_{de} \neq 0$ while node $e$ is isolated.
\end{example}
\begin{figure}
  \centering
  \stackunder{\begin{tikzpicture}
    \node [circle,fill=black,inner sep=1pt] (a) at (-0.5,0.866025403784) [label=90.0:$\mathstrut a$] {};
    \node [circle,fill=black,inner sep=1pt] (b) at (-1,0) [label=270.0:$\mathstrut b$] {};
    \node [circle,fill=black,inner sep=1pt] (c) at (0,0) [label=270.0:$\mathstrut c$] {};
    \node [circle,fill=black,inner sep=1pt] (d) at (0.5,0.866025403784) [label=90.0:$\mathstrut d$] {};
    \node [circle,fill=black,inner sep=1pt] (e) at (1,0) [label=270.0:$\mathstrut e$] {};
    \draw [blue,arrows=
      {_-Stealth[sep,length=1ex]}]
      (a) -- (b);
    \draw [blue,arrows=
      {_-Stealth[sep,length=1ex]}]
      (b) -- (c);
    \draw [blue,arrows=
      {_-Stealth[sep,length=1ex]}]
      (c) -- (d);
    \draw [red,dashed,arrows=
      {Stealth[sep,length=1ex]-Stealth[sep,length=1ex]}]
      (a) -- (c);
    \draw [red,dashed,arrows=
      {Stealth[sep,length=1ex]-Stealth[sep,length=1ex]}]
      (a) -- (d);
  \end{tikzpicture}}{(a)}
  \hfill
  \stackunder{\begin{tikzpicture}
    \node [circle,fill=black,inner sep=1pt] (a) at (0.0,1) [label=90.0:$\mathstrut a$] {};
    \node [circle,fill=black,inner sep=1pt] (b) at (0.0,0) [label=270.0:$\mathstrut b$] {};
    \node [circle,fill=black,inner sep=1pt] (c) at (0.866025403784,0.5) [label=270.0:$\mathstrut c$] {};
    \node [circle,fill=black,inner sep=1pt] (d) at (1.73205080757,1) [label=90.0:$\mathstrut d$] {};
    \node [circle,fill=black,inner sep=1pt] (e) at (1.73205080757,0) [label=270.0:$\mathstrut e$] {};
    \draw [blue,arrows=
      {_-Stealth[sep,length=1ex]}]
      (a) -- (b);
    \draw [blue,arrows=
      {_-Stealth[sep,length=1ex]}]
      (a) -- (c);
    \draw [blue,arrows=
      {_-Stealth[sep,length=1ex]}]
      (b) -- (c);
    \draw [blue,arrows=
      {_-Stealth[sep,length=1ex]}]
      (c) -- (d);
    \draw [red,dashed,arrows=
      {Stealth[sep,length=1ex]-Stealth[sep,length=1ex]}]
      (c) to[bend left] (d);
    \draw [blue,arrows=
      {_-Stealth[sep,length=1ex]}]
      (c) -- (e);
    \draw [red,dashed,arrows=
      {Stealth[sep,length=1ex]-Stealth[sep,length=1ex]}]
      (c) to[bend right] (e);
    \draw [blue,arrows=
      {_-Stealth[sep,length=1ex]}]
      (d) -- (e);
  \end{tikzpicture}}{(b)}
  \hfill
  \stackunder{\begin{tikzpicture}
    \node [circle,fill=black,inner sep=1pt] (a) at (0.5,0.866025403784)  [label=0.0:$\mathstrut a$] {};
    \node [circle,fill=black,inner sep=1pt] (b) at (1,0) [label=00.0:$\mathstrut b$] {};
    \node [circle,fill=black,inner sep=1pt] (c) at (1,-1) [label=0.0:$\mathstrut c$] {};
    \node [circle,fill=black,inner sep=1pt] (d) at (0,-1) [label=180.0:$\mathstrut d$] {};
    \node [circle,fill=black,inner sep=1pt] (e) at (0,0) [label=180.0:$\mathstrut e$] {};
    \draw [blue,arrows=
      {_-Stealth[sep,length=1ex]}]
      (a) -- (b);
    \draw [blue,arrows=
      {_-Stealth[sep,length=1ex]}]
      (a) -- (e);
    \draw [blue,arrows=
      {_-Stealth[sep,length=1ex]}]
      (c) -- (b);
    \draw [blue,arrows=
      {_-Stealth[sep,length=1ex]}]
      (c) -- (e);
    \draw [blue,arrows=
      {_-Stealth[sep,length=1ex]}]
      (d) -- (b);
    \draw [blue,arrows=
      {_-Stealth[sep,length=1ex]}]
      (d) -- (e);
    \draw [red,dashed,arrows=
      {Stealth[sep,length=1ex]-Stealth[sep,length=1ex]}]
      (b) -- (e);
    \draw [red,dashed,arrows=
      {Stealth[sep,length=1ex]-Stealth[sep,length=1ex]}]
      (c) -- (d);
  \end{tikzpicture}}{(c)}
  \caption{(a)~A BAP for which the graphically represented ideal is $I$-primary but not PD-primary; (b)~an ADMG for which the graphically represented ideal is not $I$-primary; (c)~a BAP whose model may be mistakenly classified as a submodel of (b)'s model due to the latter's spurious components.}\label{fig:nonprimary}
\end{figure}
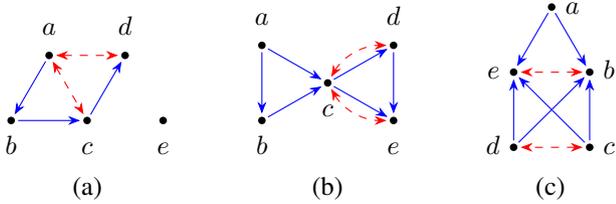
\begin{example}\label{ex:non_I_primary}
  Consider the graph in Figure~\ref{fig:nonprimary}(b). This graph is not a BAP, but is HTC-identifiable so that a graphically represented ideal can be found. In this case, such an ideal will be neither PD- nor $I$-primary. The set of points that satisfy the graphically represented constraints contains the set $\set{\Sigma \mid \sigma_{ac} = \sigma_{ad} = 0}$, even though most $\Sigma$'s in this set are not actually in the model and are thus spurious. Note that this set is precisely the model of the graph in Figure~\ref{fig:nonprimary}(c). So we see that in this case, the set of points that satisfy the graphically represented constraints is now so much larger than the model that it contains another model; in fact, one of the same dimensionality.

\end{example}

\Citet{VanOmmenDrton2022_GraphicalConstraints} show that for ancestral graphs \citep{RichardsonSpirtes2002_MAGs}, the graphically represented ideal is PD-primary, and for BAPs, it is $I$-primary. %

\subsection{$I$-Primary Ideals Enable Model Inclusion Testing}

We see in Example~\ref{ex:non_I_primary} that the spurious component of a non-$I$-primary ideal for model $\algM(G')$ may allow a set of $\Sigma$'s large enough to contain another model $\algM(G)$ in its entirety.
This would pose a problem for our algorithms: to decide whether $\algM(G) \subseteq \algM(G')$, we want to detect if there is a point in $\algM(G) \setminus \algM(G')$, but all such points might be `hidden' behind a spurious component.
As the following theorem shows,
for $I$-primary ideals, $\algM(G) \setminus \algM(G')$ cannot be completely covered by a spurious component in this way. More strongly, generic points in $\cM(G)$ will not be covered by spurious components.
\begin{theorem}\label{thm:dimension_argument}
  Let $J$ be an $I$-primary ideal for $\algM(G')$. Let $\cM(G)$ be another graphical model. Then $\cM(G) \cap V(J) \setminus \algM(G')$ is of lower dimension than $\cM(G)$. %
\end{theorem}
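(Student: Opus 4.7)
The plan is to decompose $V(J)$ into its irreducible components and exploit the irreducibility of the models on both sides. Because the parameter space for $G'$ is an irreducible affine variety and $\phi$ is polynomial, $\algM(G')$ is itself an irreducible Zariski-closed set; the $I$-primary hypothesis then forces $\algM(G')$ to appear as one of the irreducible components of $V(J)$. Writing the irreducible decomposition as $V(J) = \algM(G') \cup W_1 \cup \ldots \cup W_k$, where the $W_i$ are the ``spurious'' irreducible components (each not contained in $\algM(G')$), and using that $\algM(G')$ is already Zariski closed, the set of interest satisfies
\begin{equation*}
\cM(G) \cap V(J) \setminus \algM(G') \subseteq \bigcup_{i=1}^{k} (\cM(G) \cap W_i).
\end{equation*}

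Next, I would apply the same irreducibility reasoning to the other graph: $\algM(G)$ is an irreducible variety of the same dimension as $\cM(G)$, and it contains the identity matrix $I$, which is attained by the parameter choice $\Lambda = 0$, $\Omega = I$, valid for any graph. The $I$-primary hypothesis, read as the statement that no spurious component $W_i$ contains $I$, combined with irreducibility of $\algM(G)$ and $I \in \algM(G)$, forces $\algM(G) \not\subseteq W_i$. Hence $\algM(G) \cap W_i$ is a proper closed subvariety of the irreducible variety $\algM(G)$ and therefore has strictly smaller dimension. A finite union of sets of strictly lower dimension still has strictly lower dimension, yielding the desired $\dim(\cM(G) \cap V(J) \setminus \algM(G')) < \dim \cM(G)$.

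The hard part will lie in the algebraic-geometric groundwork rather than the combinatorics. I must verify that $\algM(G)$ and $\algM(G')$, defined in the paper by dropping inequality constraints, really coincide with the Zariski closures of $\cM(G)$ and $\cM(G')$, so that irreducibility transfers from the connected parameter space through the polynomial map $\phi$. I also need to be explicit about the operational reading of ``$I$-primary'': it must mean that the identity matrix lies only on the main irreducible component $\algM(G')$ of $V(J)$ and on no spurious component, since the literal set-theoretic reading ``$I \notin V(J) \setminus \algM(G')$'' is automatically true from $I \in \algM(G')$. Once these two foundations are in place, the rest of the argument reduces to the standard principle that an irreducible variety cannot be contained in any proper closed subset.
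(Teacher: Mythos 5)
Your proposal is correct and follows essentially the same route as the paper's proof: both decompose $V(J)$ into the main component $\algM(G')$ plus spurious components, use that $I \in \algM(G)$ but $I$ lies on no spurious component (the operative meaning of $I$-primary, which you correctly pin down), invoke irreducibility of $\algM(G)$ to conclude each intersection with a spurious component has strictly lower dimension, and finish with the finite-union argument. The only cosmetic difference is that you phrase the decomposition in terms of irreducible components of $V(J)$ while the paper uses the primary decomposition of the ideal $J$; your added care about $\algM(G)$ being the Zariski closure of $\cM(G)$ is a reasonable point the paper treats implicitly.
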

$V(J)$ denotes the set of points $\Sigma$ that are zeros of all polynomials in the ideal $J$. Note that $V(J) \setminus \algM(G')$ is the set of points covered by spurious components of $J$. See \citet{CoxLittleOShea2015} for the definition of dimension in this context. The proof of this theorem is provided in Appendix~\ref{app:proofs}.

Our algorithms are built on this, and on the fact that graphically represented ideals of BAPs are $I$-primary.

\section{Algorithms}\label{sec:algorithms}

In this section, we introduce three algorithms and prove their correctness and efficiency. Algorithm~\ref{alg:decide_constraint} decides whether a graph imposes a specified algebraic constraint. Algorithm~\ref{alg:decide_inclusion} compares two graphs, and decides whether the algebraic model of the first is contained in that of the second. Finally, Algorithm~\ref{alg:decide_equivalence} decides whether two graphs are algebraically equivalent.

The algorithms we will introduce are randomized algorithms. Specifically, they are Monte Carlo algorithms with one-sided error: when given an input for which the correct answer is `true', they will always correctly answer `true', but when given an input for which the correct answer is `false', there is a small probability that they incorrectly output `true' (i.e.~a false positive).

The probability $q$ of an incorrect answer depends on the input, and for each algorithm we prove an upper bound on this probability in the theorems below. If a higher degree of confidence is desired, the algorithm can be run repeatedly, sampling new, independent random values each time, until it outputs `false' once or `true' $k$ times. In the former case, we can be sure of the correctness of the answer; in the latter case, the probability of error has been reduced to $q^k$.

\subsection{Testing a Constraint}\label{sec:decide_constraint}

The problem of testing whether a graph imposes a constraint can be thought of as the analogue to testing a d-separation in a DAG, generalized from DAGs to ADMGs and from (conditional) independence constraints to algebraic constraints.

Intuitively, to decide whether a graphical model $\algM(G)$ imposes a constraint, we can take a random point $\Sigma$ in $\cM(G)$ by choosing random values for the model's parameters. If we find a $\Sigma$ that does not satisfy the constraint, we conclude that the model does not impose this constraint. If $\Sigma$ does satisfy the constraint, we are not sure, but using that a polynomial that is not identically zero will assume nonzero values in `most' places, we have evidence that the constraint is zero, thus satisfied, for all $\Sigma \in \algM(G)$. This is the essence of Algorithm~\ref{alg:decide_constraint}. The word `most' above can be made precise in different ways: using the concept of dimension as in Theorem~\ref{thm:dimension_argument}, or by bounding the number of zeros in certain finite regions. The latter is what we use in our proofs.

In order to implement this idea in an algorithm, we have to make a choice of what parameter values to sample:
\begin{itemize}
\item We can sample real-valued numbers (or in practice, floating-point numbers) and compute with those. This has the disadvantage that we have to be aware of numerical error in the computations. As such, if we find that $f(\Sigma)$ is not exactly zero but within some tolerance, we have to return `true', increasing the probability of error if actually the constraint is not satisfied.
\item To avoid numerical issues, we can sample integer values. The computation of $\Sigma$ and then of $f(\Sigma)$ takes the form of a polynomial with possibly large degree. So if we sample from a large range of integers, the intermediate results will not fit into a computer word and arithmetic operations become slower. If we sample from a small range, again the probability of error increases.
\item We can sample and compute with elements of the finite field $\FF_p$ for a sufficiently large prime $p$, i.e.~carrying out all computations modulo $p$ \citep{vzGathenGerhard_ModernComputerAlgebra}. A suitable choice is $2^{31}-1$: this allows all arithmetic operations to be implemented efficiently on any 64-bit computer. Because we only have to return `true' if the computation comes out as \emph{exactly} 0 modulo $p$, the probability of error is extremely small. %
\end{itemize}
Clearly, it is advantageous to work with $\FF_p$. The algorithms in this section take $p$ as an input. Theorems~\ref{thm:decide_constraint} and~\ref{thm:decide_inclusion} will make precise what values of $p$ are `sufficiently large', and how confident we can be when we receive a `true' output. By choosing $p$ large enough, we can ensure the probability of error is below any desired bound. For example, for the choice $p = 2^{31}-1$ suggested above and `small' inputs (e.g.~graphs of five nodes), all algorithms have a one-sided probability of error less than $4.61 \cdot 10^{-8}$.

In $\FF_p$, there is no distinction between positive and negative numbers. As a result, the concept of positive definiteness is not well-defined, and we do not require such a property of the `covariance' matrices that appear in our algorithms. We could not rely on positive definiteness to begin with: for BAPs, the graphically represented ideal may fail to be PD-primary as in Example~\ref{ex:non_PD_primary}, meaning that among $\Sigma$ that satisfy the constraints yet are outside the algebraic model, also positive definite examples will exist.

\begin{algorithm}[t]
  \KwIn{An ADMG $G$, an algebraic constraint $f$ (a polynomial in $\Sigma$), and a prime $p$}
  \KwOut{If for all $\Sigma \in \algM(G)$, $f(\Sigma) = 0$,
    output \texttt{true}; otherwise, with large probability output \texttt{false}}
  \BlankLine
  Sample $\Lambda$ and $\Omega$ for $G$ uniformly at random from $\FF_p$\;
  Let $\Sigma = (I - \Lambda)^{-T} \Omega (I - \Lambda)^{-1}$\;
  \uIf{$f(\Sigma) = 0$}{ %
    \KwRet{\texttt{true}\tcp*{Evidence constraint is satisfied}}
  }
  \uElse{
    \KwRet{\texttt{false}\tcp*{Constraint definitely not satisfied}}
  }
  \caption{Decide whether a graphical model satisfies a constraint.}\label{alg:decide_constraint}
\end{algorithm}
\begin{theorem}\label{thm:decide_constraint}
  Algorithm~\ref{alg:decide_constraint} has one-sided probability of error at most $(2\ell_G + 1) \deg(f) / p$, where $\ell_G$ is the length of the longest directed path in $G$ and $\deg(f)$ is the degree of $f$. For a constraint expressed as the determinant of a $\deg(f) \times \deg(f)$ matrix, it runs in time $O(n^\omega + \deg(f)^\omega)$, where $\omega$ is the matrix multiplication exponent.\footnote{The straightforward matrix multiplication algorithm is $O(n^3)$. Asymptotically more efficient algorithms exist: Strassen's algorithm \citeyearpar{Strassen1969} attains $\omega \approx 2.81$, and algorithms based on the one by \citet{CoppersmithWinograd1990} attain $\omega \approx 2.37$. The best known lower bound is $\omega \geq 2$. However, due to the large hidden constants, these algorithms only become practically useful for large matrices. Strassen's algorithm is only viable for $n$ in the hundreds \citep{HuangSHG2016_StrassenReloaded}, and Coppersmith--Winograd-like algorithms are currently not practical at all. So for the matrices considered here, in practice $\omega = 3$.}
\end{theorem}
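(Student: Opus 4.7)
The plan is to reduce the error bound to a Schwartz--Zippel argument, whose main input is a degree bound on the polynomial $(\Lambda,\Omega) \mapsto f(\phi(\Lambda,\Omega))$ in the parameter entries. A false positive can occur only when $f \not\equiv 0$ on $\algM(G)$ yet the random $\Sigma$ happens to satisfy $f(\Sigma) = 0$. Because $\cM(G)$ is the image of the parameterization map $\phi$ and $\algM(G)$ is its Zariski closure, $f$ vanishes identically on $\algM(G)$ iff $f \circ \phi$ is the zero polynomial in the free entries of $\Lambda$ and $\Omega$. So in the `false' case, $f \circ \phi$ is a nonzero polynomial, and I only need to bound the probability that uniformly random inputs from $\FF_p$ are zeros of it.

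Next I would bound $\deg(f \circ \phi)$. Since $G$ is acyclic, a topological reordering makes $\Lambda$ strictly upper triangular, so $\Lambda^{\ell_G + 1} = 0$ and $(I-\Lambda)^{-1} = I + \Lambda + \cdots + \Lambda^{\ell_G}$, whose entries are polynomials in the $\lambda_{vw}$'s of degree at most $\ell_G$. Equivalently, by the trek rule \eqref{trekrule} each $\sigma_{vw}$ is a sum over treks, and every trek uses at most $\ell_G$ directed edges in its up-portion, at most one $\omega$ factor in the middle, and at most $\ell_G$ directed edges in its down-portion. In either view, every entry of $\Sigma$ is a polynomial in $(\Lambda,\Omega)$ of degree at most $2\ell_G + 1$, and hence $\deg(f \circ \phi) \leq (2\ell_G+1)\deg(f)$. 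Applying the Schwartz--Zippel lemma to this nonzero polynomial over $\FF_p$ gives the claimed bound $(2\ell_G+1)\deg(f)/p$.

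For the running time, $I - \Lambda$ is permutation-similar to an upper triangular matrix with $1$'s on the diagonal, so its determinant equals $1$ in any field and the inversion is well-defined over $\FF_p$. The inversion and the two subsequent $n \times n$ matrix products cost $O(n^\omega)$. The constraint matrix has $\deg(f) \times \deg(f)$ entries, each either an entry of $\Sigma$ or $0$, so filling it in costs $O(\deg(f)^2)$ and its determinant over $\FF_p$ costs $O(\deg(f)^\omega)$, yielding the stated total. The only nontrivial step is the degree bound on $\sigma_{vw}$; everything else is a routine application of Schwartz--Zippel and standard matrix-computation complexities.
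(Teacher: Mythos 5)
Your proposal is correct and follows essentially the same route as the paper's proof: a Schwartz--Zippel argument applied to $g = f \circ \phi$, with the degree of $\phi$ bounded by $2\ell_G + 1$ via the trek rule (your nilpotency view of $(I-\Lambda)^{-1}$ is an equivalent way to get the same bound), and the same $O(n^\omega + \deg(f)^\omega)$ accounting of the matrix operations. Your explicit justification that $f$ vanishes on $\algM(G)$ iff $f \circ \phi \equiv 0$ (via $\algM(G)$ being the closure of the image of $\phi$) spells out a step the paper leaves implicit, which is a welcome addition but not a different argument.
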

\begin{proof}
  Clearly, the first lines of the algorithm sample a $\Sigma$ from $\cM(G) \subseteq \algM(G)$. We see that if $f(\Sigma) = 0$ for all $\Sigma \in \algM(G)$, the algorithm always outputs `true'.

  Now consider the case that $\algM(G)$ does not satisfy $f$. The computation performed by the algorithm is the composition of two polynomials: $g(\Lambda, \Omega) = f(\phi(\Lambda, \Omega))$. The degree of $g$ is bounded by the product of the degrees of $f$ and $\phi$. Using the trek rule \eqref{trekrule}, we can bound the degree of $\phi$ by $(2\ell_G + 1)$, which is an upper bound on the degrees of the monomials that appear there. This bounds the degree of $g$ by $(2\ell_G + 1) \deg(f)$. As $g$ is not the zero polynomial, we apply the Schwartz--Zippel lemma \citep{Schwartz1980_SchwartzZippelLemma}\footnote{The lemma is known by that name because a very similar result was shown independently by \citet{Zippel1979_SchwartzZippelLemma}, though we use the bound of \citet{Schwartz1980_SchwartzZippelLemma} which is stronger in our case.} to find that
  \begin{equation*}
    P[ g(\Lambda, \Omega) = 0 \mid g \not\equiv 0 ] \leq \frac{1}{p} (2\ell_G + 1) \deg(f).
  \end{equation*}

  The tasks of computing products, inverses, and determinants of $n \times n$ matrices can each be done in time $O(n^\omega)$ \citep{BunchHopcroft1974_TriangularFactorizationInversion}. This shows that for a constraint expressed as the determinant of a $\deg(f) \times \deg(f)$ matrix, Algorithm~\ref{alg:decide_constraint} runs in time $O(n^\omega + \deg(f)^\omega)$.
\end{proof}

\subsection{Testing Model Inclusion}\label{sec:decide_inclusion}

Algorithm~\ref{alg:decide_inclusion} takes as input two graphs $G$ and $G'$ (of which $G'$ must be a BAP) and decides whether $\algM(G) \subseteq \algM(G')$, i.e., whether all algebraic constraints imposed by $\algM(G')$ are also imposed by $\algM(G)$. It builds on the techniques used in Algorithm~\ref{alg:decide_constraint}, but also requires some new ideas.

First, we need an efficiently computable description of $\algM(G')$. For this purpose, we use the graphically represented ideal described by \Citet{VanOmmenDrton2022_GraphicalConstraints} and discussed in Section~\ref{sec:graphical_ideals}.
The graphically represented ideal is based on the `rational constraints' of \Citet{VanOmmenMooij2017_AlgebraicEquivalence}.
The intuition behind these is that for the $\Sigma$ that is sampled randomly from the model of $G$, we will try to find parameters $\Lambda',\Omega'$ for $G'$ that would establish that $\Sigma \in \cM(G')$.
First, $\Lambda'$ is computed using the HTC-identification algorithm of \citet{FoygelDraismaDrton2012_htc}. This algorithm will always assign 0's to elements of $\Lambda'$ that should be 0, i.e., those that do not correspond to directed edges in $G'$. Next, $\Omega'$ is computed as $(I-\Lambda')^T \Sigma (I-\Lambda')$. This computation does not check where in $\Omega'$ it places nonzeros. If $\Sigma \in \cM(G')$, then $\Omega'$ will have its nonzeros only in permissible places, namely on the diagonal and in places where $G'$ has bidirected edges. But if $\Sigma \notin \cM(G')$, $\Omega'$ will typically have nonzeros in certain other places as well. Computing the values of these other elements of $\Omega'$ amounts to evaluating each of the rational constraints. The rational constraints do not describe the model perfectly: as Example~\ref{ex:non_I_primary} demonstrates, this algorithmic approach could give the wrong answer if we did not restrict $G'$ to be bow-free.

The graphically represented constraints differ from the rational constraints in that the graphically represented constraints are polynomials in $\Sigma$, while computing $\Lambda'$ (and thus $\Omega'$) from $\Sigma$ also requires divisions. Algorithm~\ref{alg:decide_inclusion} avoids these divisions by computing polynomial multiples of $\Lambda'$ and $\Omega'$ instead, thereby mimicking the computation of \Citet{VanOmmenDrton2022_GraphicalConstraints} exactly. Thus rather than $\Omega'$, Algorithm~\ref{alg:decide_inclusion} computes the matrix $\tilde{\Omega}'$, whose entries are multiples of $\Omega'$. Because $I - \Lambda'$ plays a more central role in this computation than $\Lambda'$, it is convenient in Algorithm~\ref{alg:decide_inclusion} to work with $\tilde{\Lambda}'$, which equals $I - \Lambda'$ except that each row is multiplied by some polynomial.

Algorithm~\ref{alg:decide_inclusion} further differs from Algorithm~\ref{alg:decide_constraint} in that it does not construct the constraints one by one, but evaluates them jointly as outlined above to avoid redundant computation between the constraints as well as within single constraints. This leads to a significant speedup: the graphically represented constraints can have degrees that are exponential in the number of nodes of $G'$, but with this more efficient computation, the algorithm remains polynomial-time. For this reason, also for the task of testing a constraint $f$, it may be preferable to use Algorithm~\ref{alg:decide_inclusion} rather than Algorithm~\ref{alg:decide_constraint}, supplying as input $G'$ a graph that imposes $f$ as its only algebraic constraint.

\begin{algorithm}[t]
  \SetKwFunction{main}{main}\SetKwFunction{solve}{solve}
  \KwIn{An ADMG $G$, a BAP $G'$, and a prime $p$}
  \KwOut{If $\algM(G) \subseteq \algM(G')$, output \texttt{true}; otherwise, with large probability output \texttt{false}}
  \BlankLine
  Sample $\Lambda$ and $\Omega$ for $G$ uniformly at random from $\FF_p$\;
  Let $\Sigma = (I - \Lambda)^{-T} \Omega (I - \Lambda)^{-1}$\;
  Let $\tilde{\Lambda}' = I_n$\;
  \For{$v \in V$ with $\deg_{G'}(v) < n-1$}{
    \solve{$v$}\;
  }
  Let $\tilde{\Omega}' = \tilde{\Lambda}'^T \Sigma \tilde{\Lambda}'$\;
  \uIf{$\tilde{\Omega}'_{vw} = 0$ for all $\set{v,w}$ nonadjacent in $G'$}{
    \KwRet{\texttt{true}\tcp*{Evidence that $\algM(G) \subseteq \algM(G')$}}
  }
  \uElse{
    \KwRet{\texttt{false}\tcp*{Definitely $\algM(G) \nsubseteq \algM(G')$}}
  }
  \BlankLine
  \BlankLine
  \BlankLine
  \Fn{\solve{$v$}}{
    \tcp{Compute and store the correct value for $\tilde{\Lambda}_{\cdot, v}$.}
    \uIf{\solve{$v$} was called previously}{
      \KwRet{}\;
    }
    \uIf{$\pa_{G'}(v) = \varnothing$}{
      \KwRet{}\;
    }
    \For{$w \in \pa_{G'}(v) \cap \htr_{G'}(v)$}{
      \solve{$w$}\;
    }
    Define matrix $\bM^{(v)}$ with a row for each $w \in \pa_{G'}(v)$ and $n$ columns by $\bM^{(v)}_{w,\cdot} = \begin{cases}\tilde{\Lambda}'_{\cdot,w}&\text{if $w \in \htr_{G'}(v)$}\\I_{\cdot,w}&\text{otherwise}\end{cases}$\;
    Let $\bA^{(v)} = \bM^{(v)} \cdot \Sigma_{\cdot,\pa_{G'}(v)}$\;
    Let $\bb^{(v)} = \bM^{(v)} \cdot \Sigma_{\cdot,v}$\;
    Let $\tilde{\Lambda}'_{v,v} = \lvert\bA^{(v)}\rvert$, and for each $w \in \pa_{G'}(v)$, $\tilde{\Lambda}'_{w,v} = -\lvert\bA^{(v)}_{w}\rvert$ where $\bA^{(v)}_{w}$ is obtained from $\bA^{(v)}$ by replacing column $w$ by $\bb^{(v)}$\;
  }
  \caption{Decide whether one algebraic model is contained in another.}\label{alg:decide_inclusion}
\end{algorithm}

\begin{theorem}\label{thm:decide_inclusion}
  Algorithm~\ref{alg:decide_inclusion} has one-sided probability of error at most
  \begin{equation*}
    \frac{1}{p}(2\ell_G + 1)\Bigl(1 + \,\max_{\mathclap{\substack{\set{v,w}\\\text{nonadjacent in $G'$}}}}\,(a_v+a_w) \Bigr), %
  \end{equation*}
  where $\ell_G$ is the length of the longest directed path in $G$ and
  \begin{equation*}
    a_v = \lvert\pa_{G'}(v)\rvert + \sum_{w \in \pa_{G'}(v) \cap \htr_{G'}(v)} a_w
  \end{equation*}
  if \solve{$v$} was called, and $a_v = 0$ otherwise.
  The runtime of Algorithm~\ref{alg:decide_inclusion} is $O(n^{\omega + 1})$.
\end{theorem}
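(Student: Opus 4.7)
I would view Algorithm~\ref{alg:decide_inclusion} as a Schwartz--Zippel-style symbolic evaluation of the graphically represented ideal $J$ of $G'$ from \citet{VanOmmenDrton2022_GraphicalConstraints} at a random point of $\cM(G)$. By construction the algorithm avoids the divisions of HTC-identification by multiplying through by leading-principal determinants, so each nonadjacent entry $\tilde{\Omega}'_{vw}$ of $\tilde{\Omega}' = \tilde{\Lambda}'^T \Sigma \tilde{\Lambda}'$ coincides with one of the polynomial generators of $J$ evaluated at the sampled $\Sigma$. The no-false-negative half of the theorem is then immediate: if $\algM(G) \subseteq \algM(G')$ then $\Sigma \in \cM(G) \subseteq \algM(G') \subseteq V(J)$, every $\tilde{\Omega}'_{vw}$ with $\{v,w\}$ nonadjacent vanishes, and the algorithm returns \texttt{true}.

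\textbf{Error bound.} Conversely, if $\algM(G) \not\subseteq \algM(G')$, the $I$-primality of $J$ for BAPs combined with Theorem~\ref{thm:dimension_argument} guarantees that at least one $\tilde{\Omega}'_{vw}$, viewed as a polynomial in $\Sigma$, does not vanish identically on $\cM(G)$; after substituting $\Sigma = (I-\Lambda)^{-T}\Omega(I-\Lambda)^{-1}$ this produces a nonzero polynomial $g$ in the $(\Lambda,\Omega)$ entries sampled in line~1, and a false positive corresponds precisely to $g$ evaluating to zero. So the task reduces to bounding $\deg g$ and applying the Schwartz--Zippel lemma. I would establish by induction along the recursion of \texttt{solve} that $\deg_\Sigma \tilde{\Lambda}'_{\cdot,v} \leq a_v$: the $(w,u)$ entry of $\bA^{(v)} = \bM^{(v)}\,\Sigma_{\cdot,\pa_{G'}(v)}$ has $\Sigma$-degree at most $a_w+1$ when $w \in \htr_{G'}(v)$ (by the inductive bound on $\tilde{\Lambda}'_{\cdot,w}$) and exactly $1$ otherwise, so summing the row-wise degree bounds yields $\deg_\Sigma |\bA^{(v)}| \leq |\pa_{G'}(v)| + \sum_{w \in \pa_{G'}(v)\cap \htr_{G'}(v)} a_w = a_v$, and the Cramer-style variants $|\bA^{(v)}_w|$ inherit the same bound. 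Consequently $\deg_\Sigma \tilde{\Omega}'_{vw} \leq a_v + 1 + a_w$, and combining this with the trek-rule bound $\deg_{(\Lambda,\Omega)} \sigma_{ij} \leq 2\ell_G + 1$ from~\eqref{trekrule} gives $\deg g \leq (2\ell_G+1)(1 + a_v + a_w)$; Schwartz--Zippel and maximising over nonadjacent pairs then produces the stated probability bound.

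\textbf{Runtime.} The setup (sampling, computing $\Sigma$ via one matrix inverse and two products, and the final computation of $\tilde{\Omega}'$) costs $O(n^\omega)$ by \citet{BunchHopcroft1974_TriangularFactorizationInversion}. Memoisation ensures each vertex triggers at most one body execution of \texttt{solve}; a single body execution forms $\bA^{(v)}$ and $\bb^{(v)}$ in $O(n^\omega)$, computes the adjugate of $\bA^{(v)}$ from one $O(n^\omega)$ LU-factorisation, and reads off $|\bA^{(v)}|$ and the $|\pa_{G'}(v)|$ numerators $|\bA^{(v)}_w|$ by $O(n)$ inner products with $\bb^{(v)}$, for a per-call cost of $O(n^\omega)$. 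The $O(n)$ calls therefore contribute $O(n^{\omega+1})$, which dominates the setup and matches the claim.

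\textbf{Main obstacle.} The subtlest step will be verifying that the polynomial rescalings implicit in $\tilde{\Lambda}'$ really do leave $\tilde{\Omega}'_{vw}$ equal to a genuine generator of $J$ rather than to a polynomial multiple that might itself vanish identically on $\cM(G)$ for spurious reasons. The deduction from ``$\algM(G) \not\subseteq \algM(G')$'' to ``some $\tilde{\Omega}'_{vw}$ is not identically zero on $\cM(G)$'' has to route through $I$-primality of $J$ and Theorem~\ref{thm:dimension_argument}; once that link is in place, the remainder is careful but routine bookkeeping of degrees.
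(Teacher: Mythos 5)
Your proposal is correct and follows essentially the same route as the paper's proof: the no-false-negative direction is immediate, the non-inclusion case is handled by routing through $I$-primality of the graphically represented ideal (the paper makes this concrete via the fact that points of $V(J)\setminus\algM(G')$ are zeros of some $\lvert\bA^{(v)}\rvert$, which evaluates to $1$ at $\Sigma=I$, together with the irreducibility/dimension argument of Theorem~\ref{thm:dimension_argument}), and the degree induction giving $\deg\tilde{\Omega}'_{vw}\leq a_v+a_w+1$ and the per-call $O(n^\omega)$ accounting match the paper's. The only cosmetic difference is in how the $k+1$ minors are extracted (adjugate-plus-inner-products versus the paper's Baur--Strassen or Gaussian-elimination variant), which does not change the stated complexity.
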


As the $a_v$-terms in the error bound need to be computed separately for each graph, it may be useful to have a bound that holds over all graphs, depending only on the number of vertices $n$.
\begin{lemma}\label{lem:n_bound}
  For $n \geq 4$, the probability of error in Algorithm~\ref{alg:decide_inclusion} is at most
  \begin{equation*}
    \frac{1}{p}(2n - 1)\left(\frac{3}{8} 2^n - 1\right).
  \end{equation*}
\end{lemma}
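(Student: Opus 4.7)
The plan is to bound the two factors of the error from Theorem~\ref{thm:decide_inclusion} separately. For the first factor, since the longest directed path in a graph on $n$ vertices has at most $n - 1$ edges, $\ell_G \leq n - 1$ and hence $2\ell_G + 1 \leq 2n - 1$. For the second factor, I need to show that $1 + \max_{\{v,w\}}(a_v + a_w) \leq \tfrac{3}{8} 2^n - 1$, i.e., that $a_v + a_w \leq \tfrac{3}{8} 2^n - 2$ for any pair $\{v, w\}$ nonadjacent in $G'$.

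The natural approach to the second bound is to unroll the recursion defining $a_v$. Iterating the defining identity shows that $a_v$ equals the number of augmented sequences $v = u_0, u_1, \ldots, u_k$ (with $k \geq 1$) in which each intermediate step satisfies $u_{i+1} \in \pa_{G'}(u_i) \cap \htr_{G'}(u_i)$ and the final step satisfies only $u_k \in \pa_{G'}(u_{k-1})$. By acyclicity of the directed part of $G'$, such sequences visit distinct vertices; by the no-bow condition, each non-terminal step requires the current vertex to possess a bidirected neighbour witnessing membership of the next vertex in $\htr$. These two structural constraints, together with the requirement that successive vertices lie in a strictly shrinking set of ancestors, are what drive the bound.

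I would then proceed by induction on $n$. The base case $n = 4$ can be verified directly: the value $2 + 2 = 4$ is achieved by a graph with two common parents of $v$ and $w$, matching $\tfrac{3}{8} \cdot 16 - 2 = 4$. For the inductive step, one removes a carefully chosen vertex (such as a DAG-source or an ancestor shared between $v$ and $w$) and applies the bound to the resulting BAP on $n - 1$ vertices, reassembling the contributions. The nonadjacency hypothesis is crucial: because $w$ is neither a parent nor a bidirected neighbour of $v$ and vice versa, certain branches of the two recursion trees cannot both be populated, yielding the joint bound $\tfrac{3}{8} 2^n - 2$ rather than twice an individual bound such as $2^{n-2} - 1$. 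I expect the main obstacle to be selecting an induction hypothesis robust under vertex deletion and handling the coupling between the two recursion trees when they share ancestors in $G'$; this will likely require a case analysis on the bidirected-neighbour structure around $v$ and $w$, and on whether the removed vertex is an ancestor of both, of only one, or of neither.
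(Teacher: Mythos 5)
Your treatment of the first factor ($\ell_G \leq n-1$, hence $2\ell_G+1 \leq 2n-1$) and your combinatorial reading of $a_v$ as counting sequences $v=u_0,\ldots,u_k$ with $u_{i+1}\in\pa_{G'}(u_i)\cap\htr_{G'}(u_i)$ at non-terminal steps are both correct, as is the observation that acyclicity forces such sequences to descend strictly in a topological order. But the core of the lemma --- the bound $a_v+a_w\leq\tfrac38 2^n-2$ --- is only announced, not proved, and the induction-on-$n$-by-vertex-deletion you sketch has a concrete defect: the quantity $a_v$ is not monotone under vertex deletion in the way your decomposition requires. If you split the sequences counted by $a_v^{G'}$ into those visiting a chosen vertex $x$ and those avoiding it, the $x$-avoiding sequences need \emph{not} be valid in $G'-x$, because the half-trek witnessing $u_{i+1}\in\htr_{G'}(u_i)$ may itself pass through $x$ (e.g.\ $u_0\leftrightarrow x \to u_1$ with $u_1\in\pa(u_0)$). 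Hence the inequality $a_v^{G'} \leq a_v^{G'-x} + (\text{contribution of }x)$ that your inductive step needs does not follow; only the reverse inequality is automatic. You acknowledge that the choice of deleted vertex and the coupling of the two recursion trees are ``obstacles,'' but these are exactly where the proof would have to live, and no resolution is given.

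The paper avoids induction on $n$ altogether: it fixes a topological order, derives the worst-case recursion $a_v = v-2+\sum_{i=2}^{v-1}a_i$ (the ``$-2$'' arising because the bidirected neighbour of $v$ that witnesses all the half-treks cannot itself be a parent of $v$ by bow-freeness, and is best taken to be the source, which contributes $a_1=0$), solves it in closed form as $a_v = 3\cdot 2^{v-3}-1$, and then exploits nonadjacency of the pair $\set{s,t}$ only to shave $a_s$ off the bound for the later node $t=n$, yielding $1+a_s+a'_n=\tfrac38 2^n-1$ by a telescoping computation. If you want to salvage your route, you would at minimum need to restrict deletion to a vertex that cannot appear on any half-trek witness (which need not exist), or abandon deletion in favour of the direct recursive bound indexed by topological position as in the paper.
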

The proof of these results is given in Appendix~\ref{app:proofs}.

For $n = 5$, Lemma~\ref{lem:n_bound} gives the bound $4.61 \cdot 10^{-8}$ on the error probability (using $p = 2^{31}-1$); with this bound, it may be acceptable to run the algorithm only once. For $n = 25$, the bound is $0.29$; then the algorithm will need to be run repeatedly to reduce the probability of error, or slower arithmetic may need to be accepted to accommodate a larger $p$. Note that without this algorithm, even for $n=4$, the problem of deciding inclusion of algebraic models required either manual computation with polynomials or extremely computationally expensive algorithms from algebraic geometry, so this algorithm is an enormous improvement. %

\subsection{Testing Model Equivalence}

Two graphs $G$ and $G'$ are called algebraically equivalent if $\algM(G) = \algM(G')$, which is the case iff $\algM(G) \subseteq \algM(G')$ and $\algM(G) \supseteq \algM(G')$. We can test both inclusions using Algorithm~\ref{alg:decide_inclusion}. But we can do a bit better by first checking if $G$ and $G'$ have the same skeleton, i.e.~if each pair of nodes that is adjacent in $G$ is also adjacent in $G'$ and vice versa. By Corollary~\ref{cor:necessary} in Section~\ref{sec:nec_suff_conds}, for BAPs, having the same skeleton is a necessary condition for algebraic equivalence. Further, we realize that for BAPs, the dimension of the model is determined by the number of edges, and that if two different algebraic models have the same dimension, then neither can be contained in the other. So to decide equivalence of two BAPs with the same skeleton, it suffices to check inclusion in one direction.
\begin{algorithm}[t]
  \KwIn{Two BAPs $G$ and $G'$, and a prime $p$}
  \KwOut{If $\algM(G) = \algM(G')$, output \texttt{true}; otherwise, with large probability output \texttt{false}}
  \BlankLine
  \uIf{$G$ and $G'$ have different skeletons}{
    \KwRet{\texttt{false}\tcp*{Definitely no equivalence}}
  }
  \uElseIf{Algorithm~\ref{alg:decide_inclusion} returns \texttt{true} for $G$, $G'$, and $p$}{
    \KwRet{\texttt{true}\tcp*{Evidence for equivalence}}
  }
  \uElse{
    \KwRet{\texttt{false}\tcp*{Definitely no equivalence}}
  }
  \caption{Decide whether two BAPs are algebraically equivalent.}\label{alg:decide_equivalence}
\end{algorithm}

We see immediately that Algorithm~\ref{alg:decide_equivalence} has the same error probability and worst-case running time as Algorithm~\ref{alg:decide_inclusion}.

\section{Other Equivalence Relations on Graphs}\label{sec:equivalence_relations}

In this section, we discuss several different equivalence relations that have been considered in the literature to compare observational models $\cM(G)$ of graphs $G$. We focus on how these equivalence relations compare to algebraic equivalence on BAPs, and what this means for the applicability of Algorithms~\ref{alg:decide_inclusion} and~\ref{alg:decide_equivalence} to the analogous decision problems for those equivalence notions.

\subsection{Distributional Equivalence}\label{sec:distributional_equivalence}

The most fine-grained equivalence relation that compares observational models is \emph{distributional equivalence}. Graphs $G$ and $G'$ are called distributionally equivalent if $\cM(G) = \cM(G')$. This equivalence notion is considered for instance by \citet{NowzohourMEB2017_EJS}.

Two graphs fail to be distributionally equivalent if even a single $\Sigma$ is present in $\cM(G)$ but missing from $\cM(G')$, or vice versa.
\citet{AmendolaDettlingDrtonOnoriWu2020_StrucLearnCyclicLSEM} call $G$ and $G'$ \emph{distributionally equivalent up to closure} if $\cl \cM(G) = \cl \cM(G')$, where $\cl \cM(G)$ denotes the topological closure of $\cM(G)$ in Euclidean topology.
In other words, $\cl \cM(G)$ contains $\cM(G)$ and adds all points that are arbitrarily close to a point already in $\cM(G)$.

The following theorem and example show how these two equivalence notions relate to algebraic equivalence for the case of BAPs.
\begin{theorem}\label{thm:alg_uptoclosure}
  For two BAPs $G$ and $G'$, $\cl \cM(G) \subseteq \cl \cM(G')$ iff $\algM(G) \subseteq \algM(G')$.
\end{theorem}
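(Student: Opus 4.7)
The plan is to prove the two directions separately. The forward direction is essentially formal; the reverse direction reduces to the auxiliary inclusion
\begin{equation*}
  \algM(H) \cap \PD_n \subseteq \cl \cM(H)
\end{equation*}
for every BAP $H$, where $\PD_n$ denotes the positive-definite cone.

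For the forward direction, $\algM(G')$ is Zariski-closed in $\mathbb{R}^{n(n+1)/2}$ and hence Euclidean-closed, so $\cM(G) \subseteq \cl \cM(G) \subseteq \cl \cM(G') \subseteq \algM(G')$; taking Zariski closures of the first and last members yields $\algM(G) \subseteq \algM(G')$ by monotonicity of Zariski closure. For the reverse direction, granting the auxiliary inclusion, any $\Sigma \in \cM(G)$ lies in $\algM(G) \cap \PD_n \subseteq \algM(G') \cap \PD_n \subseteq \cl \cM(G')$, so $\cM(G) \subseteq \cl \cM(G')$ and taking closures yields the conclusion.

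To prove the auxiliary inclusion, I would let $B \subset \algM(H)$ be the Zariski-closed locus on which the HTC-identification formulas of \citet{FoygelDraismaDrton2012_htc} for $H$ have vanishing denominators; $B$ is a proper subvariety of the irreducible variety $\algM(H)$ by HTC-identifiability of BAPs. For any $\Sigma \in (\algM(H) \cap \PD_n) \setminus B$, HTC produces $\Lambda'(\Sigma)$ with the correct zero pattern, and then $\Omega' := (I - \Lambda'(\Sigma))^T \Sigma (I - \Lambda'(\Sigma))$ is positive definite; its entries at non-bidirected off-diagonal positions are, up to nonzero rational factors, the graphically represented constraints of $H$, which vanish on $\algM(H)$. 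Thus $\Sigma = \phi_H(\Lambda'(\Sigma), \Omega') \in \cM(H)$, giving $(\algM(H) \cap \PD_n) \setminus B \subseteq \cM(H)$. A semi-algebraic dimension count then shows this Zariski-open subset is Euclidean-dense in $\algM(H) \cap \PD_n$: $B$ has strictly smaller complex dimension than $\algM(H)$, while $\algM(H) \cap \PD_n$ inherits the full real dimension of $\algM(H)$ because it contains the full-dimensional semi-algebraic set $\cM(H)$.

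The main obstacle will be rigor in the density step at points $\Sigma \in \algM(H) \cap \PD_n$ lying on a singular real stratum where $\algM(H) \cap \PD_n$ fails to be locally full-dimensional, so that a bare dimension count could in principle leave isolated exceptional points. I would handle this by transferring the problem to the parameter space: $\phi_H^{-1}(B)$ is a proper Zariski-closed subset of the connected, Euclidean-open parameter set of $H$, so generic parameter perturbations supply approximations in $\cM(H) \setminus B$ of any $\Sigma \in \cM(H)$; combining this with a real-analytic curve selection argument inside the irreducible variety $\algM(H)$ should extend the approximation to arbitrary targets in $\algM(H) \cap \PD_n$.
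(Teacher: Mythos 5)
Your proof follows essentially the same route as the paper's: both reduce the theorem to the fact that, for a BAP $H$, every positive definite point of $\algM(H)$ is a Euclidean limit of points of $\cM(H)$ (the paper phrases this as ``almost all points of $\algM(H)$ lie in $\cM(H)$,'' hence $\algM(H)=\cl\cM(H)$). The difference is that the paper simply cites \citet{VanOmmenMooij2017_AlgebraicEquivalence} for this key fact and is done in two lines, whereas you attempt to reprove it. Your forward direction (Zariski-closedness of $\algM(G')$ plus $\algM(G)$ being the Zariski closure of $\cM(G)$) is fine, your reduction of the reverse direction to the auxiliary inclusion $\algM(H)\cap\PD_n\subseteq\cl\cM(H)$ is fine, and your argument that $(\algM(H)\cap\PD_n)\setminus B\subseteq\cM(H)$ is exactly the standard HTC-identification argument (congruence preserves positive definiteness, the off-support entries of $\Omega'$ are nonzero rational multiples of constraints vanishing on $\algM(H)$).

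The one step of your re-derivation that is not yet a proof is the density step, and the repair you sketch does not close it. The parameter-space perturbation only shows that points of $\cM(H)\cap B$ are limits of points of $\cM(H)\setminus B$; it says nothing about a point $\Sigma_0\in(\algM(H)\cap\PD_n\cap B)\setminus\cM(H)$, which has no parameter preimage to perturb. And curve selection is invoked circularly: it yields an analytic arc into a semialgebraic set $S$ only once one already knows $\Sigma_0\in\cl S$, which is precisely what must be established. What is genuinely needed is that $\algM(H)\cap\PD_n$ is locally of full real dimension at every such $\Sigma_0$ --- a real variety can in principle have positive definite real points at which its real locus is locally lower-dimensional, and such points would not be approximable from the generic locus. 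That is exactly the content of the result of \citet{VanOmmenMooij2017_AlgebraicEquivalence} that the paper leans on; either cite it as the paper does, or supply a BAP-specific argument for this local-dimension claim.
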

\begin{proof}
  \Citet{VanOmmenMooij2017_AlgebraicEquivalence} show that for HTC-identifiable $G$, almost all points in $\algM(G)$ are also in $\cM(G)$.
It follows that for BAPs, $\algM(G) = \cl \cM(G)$, which proves the claim.
\end{proof}
An immediate consequence is that two BAPs are distributionally equivalent up to closure iff they are algebraically equivalent.
\begin{figure}
  \centering
  \hspace{0pt}\hfill
  \stackunder{\begin{tikzpicture}
    \node [circle,fill=black,inner sep=1pt] (a) at (1,1) [label=0.0:$\mathstrut a$] {};
    \node [circle,fill=black,inner sep=1pt] (b) at (1,0) [label=0.0:$\mathstrut b$] {};
    \node [circle,fill=black,inner sep=1pt] (c) at (0,0) [label=180.0:$\mathstrut c$] {};
    \node [circle,fill=black,inner sep=1pt] (d) at (0,1) [label=180.0:$\mathstrut d$] {};
    \draw [red,dashed,arrows=
      {Stealth[sep,length=1ex]-Stealth[sep,length=1ex]}]
      (a) -- (b);
    \draw [red,dashed,arrows=
      {Stealth[sep,length=1ex]-Stealth[sep,length=1ex]}]
      (a) -- (c);
    \draw [red,dashed,arrows=
      {Stealth[sep,length=1ex]-Stealth[sep,length=1ex]}]
      (a) -- (d);
    \draw [red,dashed,arrows=
      {Stealth[sep,length=1ex]-Stealth[sep,length=1ex]}]
      (b) -- (c);
    \draw [red,dashed,arrows=
      {Stealth[sep,length=1ex]-Stealth[sep,length=1ex]}]
      (b) -- (d);
    \draw [red,dashed,arrows=
      {Stealth[sep,length=1ex]-Stealth[sep,length=1ex]}]
      (c) -- (d);
  \end{tikzpicture}}{$G$}
  \hfill
  \stackunder{\begin{tikzpicture}
    \node [circle,fill=black,inner sep=1pt] (a) at (0.0,0) [label=270.0:$\mathstrut a$] {};
    \node [circle,fill=black,inner sep=1pt] (b) at (1.0,0) [label=270.0:$\mathstrut b$] {};
    \node [circle,fill=black,inner sep=1pt] (c) at (2,0) [label=270.0:$\mathstrut c$] {};
    \node [circle,fill=black,inner sep=1pt] (d) at (3,0) [label=270.0:$\mathstrut d$] {};
    \draw [blue,arrows=
      {_-Stealth[sep,length=1ex]}]
      (a) -- (b);
    \draw [blue,arrows=
      {_-Stealth[sep,length=1ex]}]
      (b) -- (c);
    \draw [blue,arrows=
      {_-Stealth[sep,length=1ex]}]
      (c) -- (d);
    \draw [red,dashed,arrows=
      {Stealth[sep,length=1ex]-Stealth[sep,length=1ex]}]
      (a) to[bend left] (c);
    \draw [red,dashed,arrows=
      {Stealth[sep,length=1ex]-Stealth[sep,length=1ex]}]
      (b) to[bend left] (d);
    \draw [red,dashed,arrows=
      {Stealth[sep,length=1ex]-Stealth[sep,length=1ex]}]
      (a) to[bend left] (d);
  \end{tikzpicture}}{$G'$}
  \hfill\hspace{0pt}
  \caption{Two BAPs which are distributionally equivalent up to closure, but not distributionally equivalent, as $\cM(G')$ excludes some covariance matrices that are present in $\cM(G)$.}\label{fig:zero_measure}
\end{figure}
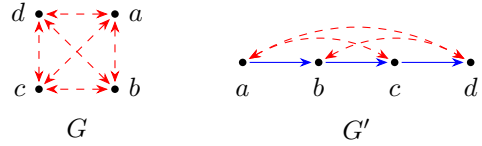
\begin{example}\label{ex:zero_measure}
  The two graphs in Figure~\ref{fig:zero_measure} are complete and hence impose no algebraic constraints. Since they are BAPs, it follows that they are distributionally equivalent up to closure. Yet they are not distributionally equivalent: the positive definite matrix
  \begin{equation*}
    \Sigma = \begin{bmatrix}
      1 & 3/4 & 2/9 & 1/2\\
      3/4 & 1 & 3/4 & 1/2\\
      2/9 & 3/4 & 1 & 1/2\\
      1/2 & 1/2 & 1/2 & 1
    \end{bmatrix},
  \end{equation*}
  is in $\cM(G)$ but not in $\cM(G')$. This can be seen by following the steps of the HTC-identification algorithm \citep{FoygelDraismaDrton2012_htc}. This algorithm will successively compute $\lambda'_{ab}$, $\lambda'_{bc}$ and $\lambda'_{cd}$ as solutions to systems of linear equations. For $\Sigma$, the first two systems have unique solutions, but the third has no solution. This proves that no parameter values $\Lambda', \Omega'$ exist for $G'$ such that $\phi(\Lambda', \Omega') = \Sigma$, so that $\cM(G') \neq \cM(G)$.
  \Citet[Figure~2]{VanOmmenMooij2017_AlgebraicEquivalence} call a difference between $\cM(G')$ and $\cl\cM(G')$ a \emph{zero-measure constraint} and give an example for a graph that includes a bow; this example demonstrates such constraints can also occur among BAPs.
\end{example}

If two graphs $G$ and $G'$ are distributionally equivalent up to closure, then in practice it will not be possible to tell the difference based on finite data without further assumptions: if $\Sigma$ maximizes the likelihood in $\cM(G)$, then $\Sigma'$'s will exist in $\cM(G')$ that come arbitrarily close to this likelihood. Thus we argue that distributional equivalence (without `up to closure') is too fine-grained for purposes of causal discovery, and distributional equivalence up to closure or coarser notions are more appropriate. If our definition of model $\cM(\cdot)$ is believed to be reasonable in a particular setting (i.e., if the variables are real-valued, the relations linear, and higher-order moments can be ignored), then it follows from Theorem~\ref{thm:alg_uptoclosure} that for causal discovery on BAPs, algebraic equivalence is the finest equivalence notion we could consider.

\subsubsection{Graphical Conditions for Algebraic Equivalence}\label{sec:nec_suff_conds}

\citet{NowzohourMEB2017_EJS} show two necessary and one sufficient graphical conditions for distributional equivalence of two BAPs. The three criteria we show below are exactly analogous, but apply to algebraic rather than distributional equivalence. In these criteria, a \emph{collider triple} is a triple $(u,v,w) \in V^3$ such that there is an edge between $u$ and $v$ as well as between $v$ and $w$, and both edges have an arrowhead at $v$. A \emph{v-structure} is a collider triple where $u$ and $w$ are nonadjacent.
\begin{theorem}[Necessary condition]\label{thm:necessary}
  Let $G$ and $G'$ be algebraically equivalent BAPs on vertex set $V$. Then for all $W \subseteq V$, the induced subgraphs $G_W$ and $G'_W$ are also algebraically equivalent.
\end{theorem}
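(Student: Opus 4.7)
The plan is to reduce the theorem to the following single-graph marginalisation lemma: for any BAP $H$ on $V$ and any $W \subseteq V$, let $S \subseteq \R^{V \times V}_{\text{sym}}$ be the affine subspace of symmetric matrices whose $W \times (V\setminus W)$ block is zero and whose $(V \setminus W) \times (V \setminus W)$ block is diagonal, and let $\pi_{W,W}$ project a matrix onto its $W \times W$ block. I claim that
\begin{equation*}
  \pi_{W,W}\bigl(\algM(H) \cap S\bigr) = \algM(H_W).
\end{equation*}
From this the theorem follows immediately: for any $\Sigma_W \in \algM(G_W)$, the $\supseteq$ direction applied to $G$ produces a lift $\Sigma \in \algM(G) \cap S$; since $\algM(G) = \algM(G')$, this $\Sigma$ lies in $\algM(G') \cap S$, and the $\subseteq$ direction applied to $G'$ gives $\Sigma_W = \pi_{W,W}(\Sigma) \in \algM(G'_W)$. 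Symmetry gives equality, and since induced subgraphs of BAPs are BAPs, $G_W$ and $G'_W$ are algebraically equivalent BAPs.

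For the $\supseteq$ direction I would start with parameters $(\Lambda_W, \Omega_W)$ realising a generic $\Sigma_W \in \cM(H_W)$ and extend them to parameters for $H$ by zeroing every $\lambda$ and every off-diagonal $\omega$ corresponding to an edge of $H$ incident to some vertex in $V \setminus W$, while choosing arbitrary values for the diagonal $\omega_{uu}$ at $u \in V \setminus W$. By the trek rule \eqref{trekrule}, any trek in $H$ between two $W$-vertices that strays outside $W$ passes through an edge with a zeroed parameter and so contributes nothing; hence $\phi_H(\Lambda, \Omega)_{W,W} = \Sigma_W$, while all off-diagonal entries of $\phi_H(\Lambda, \Omega)$ involving $V \setminus W$ also vanish. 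So $\phi_H(\Lambda, \Omega) \in \cM(H) \cap S$ projects onto $\Sigma_W$; Zariski closure combined with $\algM(\cdot) = \overline{\cM(\cdot)}$ for BAPs (Theorem~\ref{thm:alg_uptoclosure}) extends this to all of $\algM(H_W)$.

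The $\subseteq$ direction leverages the uniqueness part of HTC-identifiability. For a generic $\Sigma \in \cM(H) \cap S$, the construction above already exhibits a preimage of $\Sigma$ under $\phi_H$ consisting of ``zero-structure'' parameters, obtained by HTC-identifying $\Sigma_{W,W}$ inside $H_W$ and extending by zeros. Since HTC-identifiability \citep{FoygelDraismaDrton2012_htc} guarantees a \emph{unique} preimage of a generic $\Sigma \in \cM(H)$ under $\phi_H$, this zero-structure preimage must coincide with the HTC-identified one; in particular $(\Lambda_{W,W}, \Omega_{W,W})$ parameterises $H_W$ and witnesses $\Sigma_{W,W} \in \cM(H_W) \subseteq \algM(H_W)$. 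Zariski closure then extends the conclusion to all of $\algM(H) \cap S$.

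The step that I expect to require the most care is the genericity argument in the $\subseteq$ direction: one must verify that the HTC-failure locus of $H$ meets the slice $S$ in a proper subvariety, so that Zariski density inside $S$ lets the generic-point conclusion extend to the full intersection $\algM(H) \cap S$. The plan here is to show that the restriction of $\phi_H$ to the zero-structure parameter subspace is dominant onto (the relevant component of) $\cM(H) \cap S$, so the HTC-failure locus of $H$ pulls back to a proper subvariety of that parameter subspace; combined with the density of $\cM(H_W)$ in $\algM(H_W)$ for BAPs, this will close the argument.
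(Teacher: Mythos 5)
Your reduction to the marginalisation lemma $\pi_{W,W}\bigl(\algM(H)\cap S\bigr)=\algM(H_W)$ is an appealing idea, and the $\supseteq$ direction goes through, but the $\subseteq$ direction --- the half your application actually uses --- is false as stated, and the failure mode is not the one you flagged. The problem is not merely that the HTC-failure locus might cover the slice $S$; it is that $\algM(H)\cap S$ acquires extra irreducible components that are not closures of zero-structured parameter points, and these components project outside $\algM(H_W)$. Concretely, take $H\colon a\to b\to c$ and $W=\set{a,c}$, so that $S=\set{\Sigma \given \sigma_{ab}=\sigma_{bc}=0}$ and $\algM(H_W)=\set{\sigma_{ac}=0}$. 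Setting $\lambda_{ab}=\epsilon$, $\lambda_{bc}=1/\epsilon$, $\omega_{aa}=1$, $\omega_{bb}=\omega_{cc}=\epsilon^3$ and letting $\epsilon\to 0$ gives a point of $\cl\cM(H)\cap S=\algM(H)\cap S$ with $\sigma_{aa}=\sigma_{cc}=\sigma_{ac}=1$ and $\sigma_{ab}=\sigma_{bb}=\sigma_{bc}=0$; its $W\times W$ block has $\sigma_{ac}=1\neq 0$. So no genericity-plus-Zariski-closure argument can establish the global inclusion; at best you get it on the ``good'' component, whereas your application needs it at the particular lifted points $\iota(\Sigma_W)$ for \emph{every} $\Sigma_W\in\algM(G_W)$, including degenerate ones, with no argument that these avoid the bad components of $\algM(G')\cap S$. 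There is also a circularity in your construction of the zero-structure preimage: running HTC-identification of $\Sigma_{W,W}$ ``inside $H_W$'' only yields a valid parameter point of $H_W$ if $\Sigma_{W,W}\in\cM(H_W)$, which is precisely the conclusion you are trying to reach.

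For comparison, the paper does not argue globally at all: it observes that the proof of Theorem~1 of \citet{NowzohourMEB2017_EJS} (the analogous statement for distributional equivalence) only examines the models in a neighbourhood of $\Sigma=I$, where $\cM(G)$ and $\algM(G)$ coincide, and hence applies verbatim to algebraic equivalence. That localisation at the identity is exactly what rescues an argument of your type: near $I$ the parameterisation is well-behaved at $(\Lambda,\Omega)=(0,I)$ and the intersection with the slice $S$ has no spurious components (this is where $I$-primariness earns its keep), so the local analogue of your lemma does hold. If you want a self-contained proof along your lines, restrict the entire argument to a neighbourhood of the identity rather than trying to control the global intersection $\algM(H)\cap S$.
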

\begin{proof}
  The proof of \citet{NowzohourMEB2017_EJS}'s Theorem~1 is built on theory from algebraic geometry, and can be seen to prove our claim without modification. A bit more specifically, the proof only considers the behaviour of the models near $\Sigma = I$, where $\cM(G)$ and $\algM(G)$ coincide. We refer to \citet{NowzohourMEB2017_EJS} for the complete proof.
\end{proof}
\begin{corollary}\label{cor:necessary}
  Two algebraically equivalent BAPs must have the same skeleton and v-structures.
\end{corollary}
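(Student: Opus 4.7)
The plan is to leverage Theorem~\ref{thm:necessary}: since algebraic equivalence is preserved under taking induced subgraphs, any discrepancy in skeleton or v-structures must already show up on a two- or three-node induced subgraph, where the algebraic models are simple enough to tell apart by direct inspection using the trek rule \eqref{trekrule}.

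For the skeleton, I would pick any pair $\{v,w\}$ on which $G$ and $G'$ disagree in adjacency and look at the induced subgraphs on this pair. One of them is edgeless, and its algebraic model consists of diagonal $2 \times 2$ matrices, so it imposes the constraint $\sigma_{vw}=0$. The other contains a single edge (directed or bidirected), which provides a free parameter that makes $\sigma_{vw}$ range over all of $\FF$ (or $\R$), so no such constraint is imposed. The two induced subgraphs are thus not algebraically equivalent, contradicting Theorem~\ref{thm:necessary}.

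For the v-structures claim, I first apply the skeleton part so that $G$ and $G'$ share adjacencies. Given a v-structure $(u,v,w)$ in $G$, I consider the induced subgraphs on $\{u,v,w\}$: both have edges $uv$ and $vw$ and no edge $uw$, so they differ only in edge orientations. I would show that among BAPs on this three-node skeleton, exactly the v-structures impose $\sigma_{uw}=0$. If both incident edges have arrowheads at $v$, then $v$ is a collider on every path between $u$ and $w$; by the trek rule no trek between $u$ and $w$ exists, so $\sigma_{uw}$ vanishes identically. Conversely, if at least one of the two edges has a tail at $v$, I can exhibit a valid trek between $u$ and $w$ through $v$ (e.g., source at $v$ if both tails are at $v$, an all-forward trek if $u \to v \to w$, a trek ending at $w$ via the bidirected piece if $u \leftarrow v \leftrightarrow w$, and so on). The single monomial contributed by this trek cannot be cancelled by any other trek, so $\sigma_{uw}$ is not identically zero as a polynomial in the parameters. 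Algebraic equivalence of the induced subgraphs then forces $G'_{\{u,v,w\}}$ to also be a v-structure.

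The main obstacle is the small case analysis of non-v-structure orientations on $\{u,v,w\}$, since there are several combinations of directed and bidirected edges with at least one tail at $v$. However, each case reduces to exhibiting a single valid trek, so the trek rule turns the entire analysis into routine inspection.
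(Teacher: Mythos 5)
Your proposal is correct and follows exactly the route the paper intends: the corollary is stated as an immediate consequence of Theorem~\ref{thm:necessary}, obtained by restricting to two- and three-node induced subgraphs and checking via the trek rule that an edgeless pair versus an adjacent pair, and a collider at $v$ versus a non-collider at $v$, yield different algebraic models. Your case analysis (no trek between $u$ and $w$ exactly when both edges point into $v$, and a single uncancellable trek monomial otherwise) is the right justification, so the proposal simply spells out the derivation the paper leaves implicit.
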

\begin{theorem}[Sufficient condition]\label{thm:sufficient}
  If two BAPs have the same skeleton and collider triples, they are algebraically equivalent.
\end{theorem}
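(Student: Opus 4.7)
My plan is to reduce the claim to the analogous result that \citet{NowzohourMEB2017_EJS} (their Theorem~2) established for \emph{distributional} equivalence of two BAPs. That theorem asserts that if $G$ and $G'$ are BAPs with the same skeleton and the same collider triples, then $\cM(G) = \cM(G')$. Since distributional equivalence is formally stronger than algebraic equivalence for BAPs, the sufficient condition for the stronger notion is also sufficient for the weaker one, so invoking this result as a black box will suffice.

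Concretely, I would first cite \citet{NowzohourMEB2017_EJS} to obtain $\cM(G) = \cM(G')$ and then take closures to get $\cl\cM(G) = \cl\cM(G')$. The proof of Theorem~\ref{thm:alg_uptoclosure} already establishes that $\algM(H) = \cl\cM(H)$ for every BAP $H$, using the fact from \citet{VanOmmenMooij2017_AlgebraicEquivalence} that almost all points in the algebraic model of an HTC-identifiable graph lie in the regular model. Chaining these equalities gives $\algM(G) = \cl\cM(G) = \cl\cM(G') = \algM(G')$, which is the required algebraic equivalence.

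The main (minor) obstacle is to confirm that the cited theorem really delivers full distributional equivalence rather than a weaker variant such as equivalence up to closure; inspecting Nowzohour et al.'s argument, one sees it explicitly constructs, for each $\Sigma \in \cM(G)$, parameters for $G'$ realizing the same $\Sigma$, so the identity $\cM(G) = \cM(G')$ is genuine. Should a reader prefer a self-contained argument, one could localize everything to a neighborhood of $\Sigma = I$, where $\cM$ and $\algM$ coincide: the shared skeleton matches the number of parameters and thus the local model dimension, while the shared collider triples permit an explicit parameter translation between the two graphs that preserves $\Sigma$, so the tangent spaces at $I$, and hence the closures $\cl\cM(G)$ and $\cl\cM(G')$, agree.
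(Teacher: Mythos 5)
Your proposal is correct and follows essentially the same route as the paper: the paper's proof likewise cites Theorem~2 of \citet{NowzohourMEB2017_EJS} for distributional equivalence and then notes that distributional equivalence implies algebraic equivalence. Your extra step through closures and Theorem~\ref{thm:alg_uptoclosure} just makes that final implication explicit.
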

\begin{proof}
  By \citet{NowzohourMEB2017_EJS}'s Theorem~2, two BAPs that satisfy this condition are distributionally equivalent, and distributional equivalence implies algebraic equivalence.
\end{proof}

The conditions of Corollary~\ref{cor:necessary} and Theorem~\ref{thm:sufficient} are easy to check by looking at the graphs and allow us to infer algebraic (non)equivalence of large sets of graphs without examining them one pair at a time.
But they leave room between them: two BAPs that have the same skeleton and the same v-structures but different collider triples may or may not be algebraically equivalent. Establishing a single graphical criterion that is simultaneously necessary and sufficient for algebraic equivalence is an important open problem.
Of course, for a specific pair of graphs, Algorithm~\ref{alg:decide_equivalence} can be used to decide algebraic equivalence.

\subsection{Markov and Nested Markov Equivalence}\label{sec:markov_nestedmarkov}

Two ADMGs are \emph{Markov equivalent} if their models impose the same set of (conditional) independence constraints (or, in the context of LSEMs, vanishing (partial) correlation constraints). \emph{Maximal ancestral graphs (MAGs)} \citep{RichardsonSpirtes2002_MAGs} are a special subclass of ADMGs for which the set of algebraic constraints and the set of (conditional) independence constraints are in one-to-one correspondence: by Corollary~8.19 of \citet{RichardsonSpirtes2002_MAGs}, %
two MAGs $G$ and $G'$ impose the same set of (conditional) independence constraints iff $\cM(G) = \cM(G')$. Thus, when given two MAGs, Algorithm~\ref{alg:decide_equivalence} decides whether they are Markov equivalent.

We slightly extend the result above to show that also Algorithm~\ref{alg:decide_inclusion} can be used to compare Markov models when given two MAGs:
\begin{theorem}\label{thm:markov_equivalence}
  For two MAGs $G$ and $G'$, $\cM_m(G) \subseteq \cM_m(G')$ iff $\cM(G) \subseteq \cM(G')$ iff $\algM(G) \subseteq \algM(G')$.
\end{theorem}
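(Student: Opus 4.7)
The plan is to derive the three-way equivalence from two ingredients: the Richardson--Spirtes characterisation of MAG models by their m-separations (which underlies the cited Corollary~8.19), together with the already-established Theorem~\ref{thm:alg_uptoclosure}. First I would recall that for a MAG $G$, Richardson and Spirtes prove slightly more than Corollary~8.19: $\cM(G)$ equals the set of positive-definite covariance matrices $\Sigma$ that satisfy every vanishing-partial-correlation constraint implied by m-separation in $G$. Viewing $\cM_m(G)$ as exactly this set, this gives $\cM(G) = \cM_m(G)$, so the equivalence $\cM_m(G) \subseteq \cM_m(G') \iff \cM(G) \subseteq \cM(G')$ is immediate.

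Next I would combine this with Theorem~\ref{thm:alg_uptoclosure}: since MAGs are BAPs, that theorem yields $\algM(G) = \cl \cM(G)$ and $\algM(G') = \cl \cM(G')$. Hence the direction $\cM(G) \subseteq \cM(G') \Rightarrow \algM(G) \subseteq \algM(G')$ follows simply by taking topological closures.

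The main obstacle is the converse $\algM(G) \subseteq \algM(G') \Rightarrow \cM(G) \subseteq \cM(G')$. This implication fails for general BAPs (cf.\ Example~\ref{ex:zero_measure}), so we must exploit MAG-specific structure. The first step yields $\cM(G) = \algM(G) \cap \PD$: $\algM(G)$ is the vanishing locus of the partial-correlation polynomials, and intersecting with positive definiteness recovers precisely the MAG model; the same identity holds for $G'$. Given $\Sigma \in \cM(G)$ we then have $\Sigma \in \algM(G) \subseteq \algM(G')$ together with $\Sigma$ positive definite, so $\Sigma \in \algM(G') \cap \PD = \cM(G')$. Chaining the three implications closes the cycle and delivers the theorem.
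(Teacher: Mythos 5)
The gap is in your first equivalence. The paper defines $\cM_m(G)$ as the set of \emph{all} distributions (not only Gaussian ones) satisfying the conditional independences imposed by $G$; you instead ``view $\cM_m(G)$'' as the set of positive-definite covariance matrices satisfying the vanishing-partial-correlation constraints, which silently replaces the Markov model by its Gaussian restriction and makes the equivalence $\cM_m(G) \subseteq \cM_m(G') \iff \cM(G) \subseteq \cM(G')$ nearly tautological. Under the paper's definition, the direction $\cM(G) \subseteq \cM(G') \Rightarrow \cM_m(G) \subseteq \cM_m(G')$ is the nontrivial one: a priori, every Gaussian distribution in $\cM_m(G)$ could happen to satisfy extra independences and thus land in $\cM_m(G')$ even when the full (distribution-free) Markov models are not nested. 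The paper closes this by invoking Richardson and Spirtes' Theorem~7.5, which guarantees a Gaussian distribution \emph{faithful} to $\cM_m(G)$; that single witness, lying in $\cM_m(G') \cap \cN$, forces every independence of $G'$ to be implied by $G$ and hence $\cM_m(G) \subseteq \cM_m(G')$. Your argument never uses faithfulness, so this implication is unproved, and the theorem's intended content (that the Gaussian algorithms decide inclusion of distribution-free Markov models) is lost.

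The remainder of your plan is essentially sound and close to the paper's: both rest on Richardson and Spirtes' Theorem~8.14, which the paper uses in the form $\cM(G) = \cM_m(G) \cap \cN = \algM(G)$ (as sets of Gaussian distributions), while you phrase it as $\cM(G) = \algM(G) \cap \PD$ and additionally route one direction through $\algM(G) = \cl\cM(G)$ from Theorem~\ref{thm:alg_uptoclosure}. That detour via closures is harmless but unnecessary once you have $\cM(G) = \algM(G) \cap \PD$ for MAGs, since then both inclusions between the $\algM$'s and the $\cM$'s follow directly by intersecting with, or discarding, the positive-definiteness condition. Do make explicit that the identity $\cM(G) = \algM(G) \cap \PD$ is itself a MAG-specific fact (it fails for general BAPs by Example~\ref{ex:zero_measure} and the non-PD-primary phenomena of Example~\ref{ex:non_PD_primary}), justified by the PD-primariness of the graphically represented ideal for ancestral graphs together with Theorem~8.14.
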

Here $\cM_m(G)$ denotes the Markov model of $G$, i.e.~the set of all distributions that satisfy all (conditional) independence constraints imposed by $G$.
\begin{proof}
  $\cN$ denotes the set of all Gaussian distributions, and here we will regard $\cM(G)$ as the set of all Gaussian distributions in the LSEM model of $G$ (instead of as the set of all covariance matrices of those distributions as we do elsewhere).

  First, we claim that $\cM_m(G) \cap \cN \subseteq \cM_m(G') \cap \cN$ iff $\cM_m(G) \subseteq \cM_m(G')$. The proof is analogous to that of Theorem~8.13 of \citep{RichardsonSpirtes2002_MAGs}: First, the implication from right to left is obvious. For the other direction, suppose $\cM_m(G) \cap \cN \subseteq \cM_m(G') \cap \cN$. By Theorem~7.5 of \citeauthor{RichardsonSpirtes2002_MAGs}, there exists a distribution $N \in \cN$ faithful to $\cM_m(G)$. This $N$ is also in $\cM_m(G') \cap \cN$. It follows that any (conditional) independence imposed by $G'$ is also imposed by $G$; i.e., $\cM_m(G) \subseteq \cM_m(G')$.

  By Theorem~8.14 of \citeauthor{RichardsonSpirtes2002_MAGs}, for a MAG $G$, $\cM(G) = \cM_m(G) \cap \cN$. So $\cM(G) \subseteq \cM(G')$ iff $\cM_m(G) \cap \cN \subseteq \cM_m(G') \cap \cN$, which by the claim above is equivalent to $\cM_m(G) \subseteq \cM_m(G')$. Since also $\algM(G) = \cM_m(G) \cap \cN$, the claim about $\algM$ follows.
\end{proof}

For any graph, we can define its algebraic model and see which algebraic constraints it imposes. Some of these constraints may correspond to (conditional) independences, but others may be of the more general kinds listed on page~\pageref{desc:constraint_types}, which are ignored by Markov equivalence. Thus for general graphs, Markov equivalence is coarser than algebraic equivalence, so that using algebraic equivalence in causal discovery will give us more power to distinguish between different graphs than the more commonly used Markov equivalence gives us. This is what motivated us to research algebraic equivalence in this paper.

\emph{Nested Markov equivalence} \citep{ShpitserERR2014_NestedMarkovIntroduction,RichardsonERS2023_NestedMarkovPropertiesADMG} refines ordinary Markov equivalence by considering not only (conditional) independences in the observational distribution, but also in kernels. These kernels can be understood as representing interventional distributions that can be identified from the observational distribution. For example, for the graph in Figure~\ref{fig:nonprimary}(a), the distribution after intervening on $X_b$ is identifiable, and in this distribution, given $X_c$, the value of $X_d$ is independent of that of $X_b$. This conditional independence in a kernel translates back to a constraint on the original observational distribution: a \emph{nested Markov constraint}.

Like ordinary Markov equivalence, but unlike algebraic equivalence and distributional equivalence (up to closure), nested Markov equivalence does not depend on the ranges of the random variables or on parametric assumptions such as linearity. It does have a special role in the context of discrete variables: as shown by \citet{Evans2018_MarginsDiscrete}, the nested Markov model reflects all equality constraints on the observed distribution.
Thus it is to discrete variable models as the notion of algebraic equivalence studied in this paper is to LSEMs.

\citet{ShpitserEvansRichardson2018_LSEMNestedMarkov} define a subclass of BAPs called \emph{maximal arid graphs (MArGs)}, %
as well as a projection operator that takes any ADMG $G$ to a nested Markov equivalent MArG $G^\dagger$. In a MArG, each nonadjacency corresponds to a nested Markov constraint. As such, MArGs play the same role for nested Markov models as MAGs play for ordinary Markov models.

In the following theorem, $\cM_n(G)$ denotes the nested Markov model of $G$: the set of all distributions that satisfy all nested Markov constraints imposed by $G$.
\begin{theorem}\label{thm:nested_markov_equivalence}
  For two MArGs $G$ and $G'$, if $\cM_n(G) \subseteq \cM_n(G')$ then $\cM(G) \subseteq \cM(G')$ (and thus $\algM(G) \subseteq \algM(G')$).
\end{theorem}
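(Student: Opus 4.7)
The plan is to mirror the structure of the proof of Theorem~\ref{thm:markov_equivalence}, replacing ordinary Markov equivalence with nested Markov equivalence and MAGs with MArGs. The starting observation is that every distribution in the LSEM $\cM(G)$ automatically satisfies the nested Markov constraints imposed by $G$; this gives the basic containment $\cM(G) \subseteq \cM_n(G)$, which holds for any ADMG (not just MArGs) and does not use the hypothesis.

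Combining this with the assumed inclusion $\cM_n(G) \subseteq \cM_n(G')$ yields $\cM(G) \subseteq \cM_n(G')$. Since the LSEM model consists only of Gaussian distributions, I can restrict attention to $\cN$ and conclude $\cM(G) \subseteq \cM_n(G') \cap \cN$. The goal $\cM(G) \subseteq \cM(G')$ will then follow once one has the MArG analogue of Richardson–Spirtes' Theorem~8.14, namely $\cM(G') \supseteq \cM_n(G') \cap \cN$ — that is, a Gaussian distribution satisfying every nested Markov constraint imposed by the MArG $G'$ is realized by some choice of LSEM parameters for $G'$.

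This last point is where I expect the only real difficulty to lie. For MAGs the analogous identity was established by Richardson and Spirtes via a counting/parameterization argument exploiting that each nonadjacency corresponds to a single conditional independence. In the MArG setting, \citet{ShpitserEvansRichardson2018_LSEMNestedMarkov} set up exactly the parallel correspondence — every nonadjacency in a MArG contributes one nested Markov constraint, and the LSEM parameters can be chosen to match these constraints one at a time — so the result should be invokable (or, if not stated verbatim, proved by an argument closely following theirs together with the projection operator $G \mapsto G^\dagger$). I would treat this as a citation-level step rather than reproving it.

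Finally, the parenthetical algebraic conclusion is essentially free: if $\cM(G) \subseteq \cM(G')$, then every polynomial vanishing on $\cM(G')$ vanishes on $\cM(G)$, so the ideal of $\algM(G')$ is contained in the ideal of $\algM(G)$, whence $\algM(G) \subseteq \algM(G')$. So the entire theorem reduces to Step~1 (which is immediate from the definition of a nested Markov constraint being a consequence of the LSEM structure) plus the MArG version of Richardson–Spirtes 8.14, which I would attribute to \citet{ShpitserEvansRichardson2018_LSEMNestedMarkov}.
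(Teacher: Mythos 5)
Your proposal is correct and follows essentially the same route as the paper: the paper's proof is a one-line application of \citet[Theorem~35]{ShpitserEvansRichardson2018_LSEMNestedMarkov}, which gives exactly the identity $\cM(G) = \cM_n(G) \cap \cN$ for MArGs that you split into its two containments (the easy one and the ``Richardson--Spirtes 8.14 analogue''). Your citation-level step is precisely that theorem, so nothing further is needed.
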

\begin{proof}
  As in the proof of Theorem~\ref{thm:markov_equivalence}, let $\cM(G)$ denote the set of all Gaussian distributions in the LSEM model of $G$.
  By \citet[Theorem~35]{ShpitserEvansRichardson2018_LSEMNestedMarkov}, $\cM(G) = \cM_n(G) \cap \cN$ for any MArG $G$. So $\cM_n(G) \subseteq \cM_n(G') \Rightarrow \cM(G) = \cM_n(G) \cap \cN \subseteq \cM_n(G') \cap \cN = \cM(G')$.
\end{proof}
In other words, inclusion of one algebraic model in another is a necessary condition for the corresponding inclusion of nested Markov models. This means that Algorithms~\ref{alg:decide_inclusion} and~\ref{alg:decide_equivalence} can be used to establish that certain pairs of graphs are not nested Markov equivalent.

We conjecture that also the converse implication holds. We verified this empirically on all MArGs of up to five nodes, by fitting algebraically equivalent MArGs on random discrete data and checking that the attained likelihood scores were close. We used the maximum likelihood fitting procedure described by \citet{EvansRichardson2010_BinaryMAG_MLFitting,EvansRichardson2019_DiscreteNestedMarkovParameterization}, as implemented in \texttt{Ananke} \citep{Ananke_arxiv}. If this conjecture is true, it would follow that Algorithms~\ref{alg:decide_inclusion} and~\ref{alg:decide_equivalence} can also be used to decide inclusion and equivalence of nested Markov models, by first applying the maximal arid projection to the input graphs.
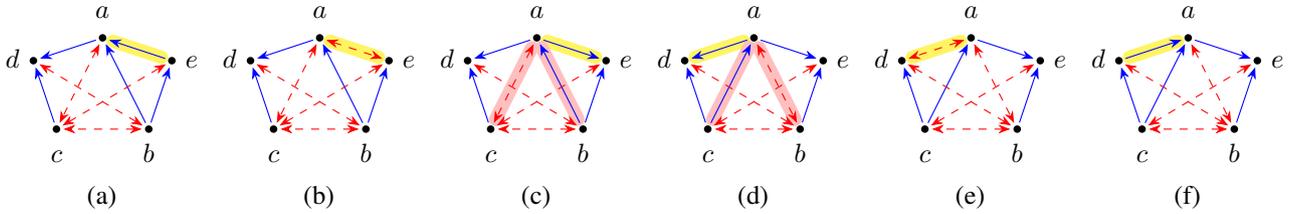
\begin{figure*}
  \centering
  \hspace{0pt}\hfill
  \stackunder{\begin{tikzpicture}[scale=.96]
    \node [circle,fill=black,inner sep=1pt] (a) at (0.9486832980505138,1.2649110640673518) [label=90:$\mathstrut a$] {};
    \node [circle,fill=black,inner sep=1pt] (b) at (1.5811388300841895,0.0) [label=270:$\mathstrut b$] {};
    \node [circle,fill=black,inner sep=1pt] (c) at (0.3162277660168379,0.0) [label=270:$\mathstrut c$] {};
    \node [circle,fill=black,inner sep=1pt] (d) at (0.0,0.9486832980505138) [label=180:$\mathstrut d$] {};
    \node [circle,fill=black,inner sep=1pt] (e) at (1.8973665961010275,0.9486832980505138) [label=0:$\mathstrut e$] {};
    \draw [yellow!75,line width=5pt,shorten <=2.5pt,shorten >=2.5pt,line cap=round] (a) -- (e);
    \draw [blue,arrows=
      {Stealth[sep,length=1ex]-_}]
      (a) -- (b);
    \draw [red,dashed,arrows=
      {Stealth[sep,length=1ex]-Stealth[sep,length=1ex]}]
      (a) -- (c);
    \draw [blue,arrows=
      {_-Stealth[sep,length=1ex]}]
      (a) -- (d);
    \draw [blue,arrows=
      {Stealth[sep,length=1ex]-_}]
      (a) -- (e);
    \draw [red,dashed,arrows=
      {Stealth[sep,length=1ex]-Stealth[sep,length=1ex]}]
      (b) -- (c);
    \draw [red,dashed,arrows=
      {Stealth[sep,length=1ex]-Stealth[sep,length=1ex]}]
      (b) -- (d);
    \draw [blue,arrows=
      {_-Stealth[sep,length=1ex]}]
      (b) -- (e);
    \draw [blue,arrows=
      {_-Stealth[sep,length=1ex]}]
      (c) -- (d);
    \draw [red,dashed,arrows=
      {Stealth[sep,length=1ex]-Stealth[sep,length=1ex]}]
      (c) -- (e);
  \end{tikzpicture}}{(a)}
  \hfill
  \stackunder{\begin{tikzpicture}[scale=.96]
    \node [circle,fill=black,inner sep=1pt] (a) at (0.9486832980505138,1.2649110640673518) [label=90:$\mathstrut a$] {};
    \node [circle,fill=black,inner sep=1pt] (b) at (1.5811388300841895,0.0) [label=270:$\mathstrut b$] {};
    \node [circle,fill=black,inner sep=1pt] (c) at (0.3162277660168379,0.0) [label=270:$\mathstrut c$] {};
    \node [circle,fill=black,inner sep=1pt] (d) at (0.0,0.9486832980505138) [label=180:$\mathstrut d$] {};
    \node [circle,fill=black,inner sep=1pt] (e) at (1.8973665961010275,0.9486832980505138) [label=0:$\mathstrut e$] {};
    \draw [yellow!75,line width=5pt,shorten <=2.5pt,shorten >=2.5pt,line cap=round] (a) -- (e);
    \draw [blue,arrows=
      {Stealth[sep,length=1ex]-_}]
      (a) -- (b);
    \draw [red,dashed,arrows=
      {Stealth[sep,length=1ex]-Stealth[sep,length=1ex]}]
      (a) -- (c);
    \draw [blue,arrows=
      {_-Stealth[sep,length=1ex]}]
      (a) -- (d);
    \draw [red,dashed,arrows=
      {Stealth[sep,length=1ex]-Stealth[sep,length=1ex]}]
      (a) -- (e);
    \draw [red,dashed,arrows=
      {Stealth[sep,length=1ex]-Stealth[sep,length=1ex]}]
      (b) -- (c);
    \draw [red,dashed,arrows=
      {Stealth[sep,length=1ex]-Stealth[sep,length=1ex]}]
      (b) -- (d);
    \draw [blue,arrows=
      {_-Stealth[sep,length=1ex]}]
      (b) -- (e);
    \draw [blue,arrows=
      {_-Stealth[sep,length=1ex]}]
      (c) -- (d);
    \draw [red,dashed,arrows=
      {Stealth[sep,length=1ex]-Stealth[sep,length=1ex]}]
      (c) -- (e);
  \end{tikzpicture}}{(b)}
  \hfill
  \stackunder{\begin{tikzpicture}[scale=.96]
    \node [circle,fill=black,inner sep=1pt] (a) at (0.9486832980505138,1.2649110640673518) [label=90:$\mathstrut a$] {};
    \node [circle,fill=black,inner sep=1pt] (b) at (1.5811388300841895,0.0) [label=270:$\mathstrut b$] {};
    \node [circle,fill=black,inner sep=1pt] (c) at (0.3162277660168379,0.0) [label=270:$\mathstrut c$] {};
    \node [circle,fill=black,inner sep=1pt] (d) at (0.0,0.9486832980505138) [label=180:$\mathstrut d$] {};
    \node [circle,fill=black,inner sep=1pt] (e) at (1.8973665961010275,0.9486832980505138) [label=0:$\mathstrut e$] {};
    \draw [yellow!75,line width=5pt,shorten <=2.5pt,shorten >=2.5pt,line cap=round] (a) -- (e);
    \draw [red!25,line width=5pt,shorten <=2.5pt,shorten >=2.5pt,line cap=round] (a) -- (b);
    \draw [red!25,line width=5pt,shorten <=2.5pt,shorten >=2.5pt,line cap=round] (a) -- (c);
    \draw [blue,arrows=
      {Stealth[sep,length=1ex]-_}]
      (a) -- (b);
    \draw [red,dashed,arrows=
      {Stealth[sep,length=1ex]-Stealth[sep,length=1ex]}]
      (a) -- (c);
    \draw [blue,arrows=
      {_-Stealth[sep,length=1ex]}]
      (a) -- (d);
    \draw [blue,arrows=
      {_-Stealth[sep,length=1ex]}]
      (a) -- (e);
    \draw [red,dashed,arrows=
      {Stealth[sep,length=1ex]-Stealth[sep,length=1ex]}]
      (b) -- (c);
    \draw [red,dashed,arrows=
      {Stealth[sep,length=1ex]-Stealth[sep,length=1ex]}]
      (b) -- (d);
    \draw [blue,arrows=
      {_-Stealth[sep,length=1ex]}]
      (b) -- (e);
    \draw [blue,arrows=
      {_-Stealth[sep,length=1ex]}]
      (c) -- (d);
    \draw [red,dashed,arrows=
      {Stealth[sep,length=1ex]-Stealth[sep,length=1ex]}]
      (c) -- (e);
  \end{tikzpicture}}{(c)}
  \hfill
  \stackunder{\begin{tikzpicture}[scale=.96]
    \node [circle,fill=black,inner sep=1pt] (a) at (0.9486832980505138,1.2649110640673518) [label=90:$\mathstrut a$] {};
    \node [circle,fill=black,inner sep=1pt] (b) at (1.5811388300841895,0.0) [label=270:$\mathstrut b$] {};
    \node [circle,fill=black,inner sep=1pt] (c) at (0.3162277660168379,0.0) [label=270:$\mathstrut c$] {};
    \node [circle,fill=black,inner sep=1pt] (d) at (0.0,0.9486832980505138) [label=180:$\mathstrut d$] {};
    \node [circle,fill=black,inner sep=1pt] (e) at (1.8973665961010275,0.9486832980505138) [label=0:$\mathstrut e$] {};
    \draw [yellow!75,line width=5pt,shorten <=2.5pt,shorten >=2.5pt,line cap=round] (a) -- (d);
    \draw [red!25,line width=5pt,shorten <=2.5pt,shorten >=2.5pt,line cap=round] (a) -- (b);
    \draw [red!25,line width=5pt,shorten <=2.5pt,shorten >=2.5pt,line cap=round] (a) -- (c);
    \draw [red,dashed,arrows=
      {Stealth[sep,length=1ex]-Stealth[sep,length=1ex]}]
      (a) -- (b);
    \draw [blue,arrows=
      {Stealth[sep,length=1ex]-_}]
      (a) -- (c);
    \draw [blue,arrows=
      {_-Stealth[sep,length=1ex]}]
      (a) -- (d);
    \draw [blue,arrows=
      {_-Stealth[sep,length=1ex]}]
      (a) -- (e);
    \draw [red,dashed,arrows=
      {Stealth[sep,length=1ex]-Stealth[sep,length=1ex]}]
      (b) -- (c);
    \draw [red,dashed,arrows=
      {Stealth[sep,length=1ex]-Stealth[sep,length=1ex]}]
      (b) -- (d);
    \draw [blue,arrows=
      {_-Stealth[sep,length=1ex]}]
      (b) -- (e);
    \draw [blue,arrows=
      {_-Stealth[sep,length=1ex]}]
      (c) -- (d);
    \draw [red,dashed,arrows=
      {Stealth[sep,length=1ex]-Stealth[sep,length=1ex]}]
      (c) -- (e);
  \end{tikzpicture}}{(d)}
  \hfill
  \stackunder{\begin{tikzpicture}[scale=.96]
    \node [circle,fill=black,inner sep=1pt] (a) at (0.9486832980505138,1.2649110640673518) [label=90:$\mathstrut a$] {};
    \node [circle,fill=black,inner sep=1pt] (b) at (1.5811388300841895,0.0) [label=270:$\mathstrut b$] {};
    \node [circle,fill=black,inner sep=1pt] (c) at (0.3162277660168379,0.0) [label=270:$\mathstrut c$] {};
    \node [circle,fill=black,inner sep=1pt] (d) at (0.0,0.9486832980505138) [label=180:$\mathstrut d$] {};
    \node [circle,fill=black,inner sep=1pt] (e) at (1.8973665961010275,0.9486832980505138) [label=0:$\mathstrut e$] {};
    \draw [yellow!75,line width=5pt,shorten <=2.5pt,shorten >=2.5pt,line cap=round] (a) -- (d);
    \draw [red,dashed,arrows=
      {Stealth[sep,length=1ex]-Stealth[sep,length=1ex]}]
      (a) -- (b);
    \draw [blue,arrows=
      {Stealth[sep,length=1ex]-_}]
      (a) -- (c);
    \draw [red,dashed,arrows=
      {Stealth[sep,length=1ex]-Stealth[sep,length=1ex]}]
      (a) -- (d);
    \draw [blue,arrows=
      {_-Stealth[sep,length=1ex]}]
      (a) -- (e);
    \draw [red,dashed,arrows=
      {Stealth[sep,length=1ex]-Stealth[sep,length=1ex]}]
      (b) -- (c);
    \draw [red,dashed,arrows=
      {Stealth[sep,length=1ex]-Stealth[sep,length=1ex]}]
      (b) -- (d);
    \draw [blue,arrows=
      {_-Stealth[sep,length=1ex]}]
      (b) -- (e);
    \draw [blue,arrows=
      {_-Stealth[sep,length=1ex]}]
      (c) -- (d);
    \draw [red,dashed,arrows=
      {Stealth[sep,length=1ex]-Stealth[sep,length=1ex]}]
      (c) -- (e);
  \end{tikzpicture}}{(e)}
  \hfill
  \stackunder{\begin{tikzpicture}[scale=.96]
    \node [circle,fill=black,inner sep=1pt] (a) at (0.9486832980505138,1.2649110640673518) [label=90:$\mathstrut a$] {};
    \node [circle,fill=black,inner sep=1pt] (b) at (1.5811388300841895,0.0) [label=270:$\mathstrut b$] {};
    \node [circle,fill=black,inner sep=1pt] (c) at (0.3162277660168379,0.0) [label=270:$\mathstrut c$] {};
    \node [circle,fill=black,inner sep=1pt] (d) at (0.0,0.9486832980505138) [label=180:$\mathstrut d$] {};
    \node [circle,fill=black,inner sep=1pt] (e) at (1.8973665961010275,0.9486832980505138) [label=0:$\mathstrut e$] {};
    \draw [yellow!75,line width=5pt,shorten <=2.5pt,shorten >=2.5pt,line cap=round] (a) -- (d);
    \draw [red,dashed,arrows=
      {Stealth[sep,length=1ex]-Stealth[sep,length=1ex]}]
      (a) -- (b);
    \draw [blue,arrows=
      {Stealth[sep,length=1ex]-_}]
      (a) -- (c);
    \draw [blue,arrows=
      {Stealth[sep,length=1ex]-_}]
      (a) -- (d);
    \draw [blue,arrows=
      {_-Stealth[sep,length=1ex]}]
      (a) -- (e);
    \draw [red,dashed,arrows=
      {Stealth[sep,length=1ex]-Stealth[sep,length=1ex]}]
      (b) -- (c);
    \draw [red,dashed,arrows=
      {Stealth[sep,length=1ex]-Stealth[sep,length=1ex]}]
      (b) -- (d);
    \draw [blue,arrows=
      {_-Stealth[sep,length=1ex]}]
      (b) -- (e);
    \draw [blue,arrows=
      {_-Stealth[sep,length=1ex]}]
      (c) -- (d);
    \draw [red,dashed,arrows=
      {Stealth[sep,length=1ex]-Stealth[sep,length=1ex]}]
      (c) -- (e);
  \end{tikzpicture}}{(f)}
  \hfill\hspace{0pt}
  \caption{An algebraic equivalence class consisting of six BAPs. Graphs (a--c) differ by one edge (highlighted in yellow) and the same is true for (d--f). But between these two clusters, the difference is at least two edges (highlighted in pink).}\label{fig:disconnected}
\end{figure*}

\section{Experimental Results}\label{sec:experiments}

To
\begin{table}[b] %
  \centering
  \caption{Average running time, number of false positives (out of at least 2000 non-inclusion instances), and theoretical upper bound on the probability of error of Algorithm~\ref{alg:decide_inclusion}, for graphs of $n$ vertices that maximize this bound, using prime $p$ as a modulus.}\label{tab:experiment_time}
  \begin{tabular}{rclll}
    \toprule %
    $n$ & $p$ & time (ms) & \#FP & error bound\\
    \midrule %
    5 & $2^{31}-1$ & 9.23 & 0 & $4.61 \cdot 10^{-8}$\\
    25 & $2^{31}-1$ & 682 & 0 & $0.287$\\
    25 & $2^{63}-25$ & 663 & 0 & $6.68 \cdot 10^{-11}$\\
    25 & $2^{127}-1$ &  680 & 0 & $3.62 \cdot 10^{-30}$\\
    \bottomrule %
  \end{tabular}
\end{table}
demonstrate the practical usability of our algorithms, we conducted a small experiment, measuring the running time and number of errors of Algorithm~\ref{alg:decide_inclusion} on graphs of $n$ vertices and different primes $p$. As inputs, we used all pairs from the family of graphs that appear in the proof of Lemma~\ref{lem:n_bound} (Appendix~\ref{app:proof_lem_n_bound}) as the maximizers of the error probability bound among all graphs of that size. These graphs should also maximize the running time, as they require \solve{v} to be called on all vertices $v$. These results are for our Python implementation; see Appendix~\ref{app:implementation} for details.

The results are displayed in Table~\ref{tab:experiment_time}. Clearly, larger graphs increase the computation time, while $p$ seems to have little impact. For the two bottom rows, we resort to Python's big-integer arithmetic, but this does not lead to a performance penalty here.
This suggests that if it is necessary to reduce the probability of error, it is better to increase $p$ rather than run the algorithm repeatedly (though this may be implementation-dependent).
Also noteworthy is that the algorithm never returned a wrong result. For most rows, this was to be expected as the probability of error is known to be extremely small in those cases. But for $n=25$ and $p = 2^{31}-1$, the bound would have allowed many hundreds of false positives while none were observed, demonstrating that the actual probability of error in this case is much smaller than the bound suggests.

The only other algorithm for deciding algebraic equivalence with runtime comparable to ours is the empirical equivalence test used by \citet{NowzohourMEB2017_EJS}. To compare these algorithms, we did an experiment using their code and recommended settings to score all 543 complete BAPs on 4 nodes (i.e.~BAPs with an edge of some type between each pair of nodes), using data randomly sampled from the BAP model with six bidirected edges. These graphs are all algebraically equivalent, yet the empirical equivalence test incorrectly concludes that over 80\% of pairs are not equivalent on average. In contrast, our algorithm provably returns `true' for a pair of algebraically equivalent graphs.

\section{Discussion and Future Work}\label{sec:discussion}

The algorithms presented in this paper are the first that can efficiently reveal the relation between the algebraic models for any pair of BAPs. Unfortunately, they do not immediately provide insight into the contents of an algebraic equivalence class.
One might hope that by starting from some graph $G$ and repeatedly making local changes to it, checking (with Algorithm~\ref{alg:decide_equivalence}) each time that the resulting graph is algebraically equivalent to $G$, one will find a list containing all graphs in $G$'s algebraic equivalence class. A natural choice for such a local change operation would be to replace any edge between $v$ and $w$ with another type of edge \citep{NowzohourMEB2017_EJS}. But as we see in Figure~\ref{fig:disconnected}, we may recover only part of an equivalence class this way.

Markov equivalence can be graphically characterized for DAGs in terms of the skeleton and v-structures \citep{VermaPearl1991_VermaConstraint}, and for the more general ancestral graphs in terms of the skeleton and `colliders with order' \citep{AliRichardsonSpirtes2009_MarkovEquivalenceMAG,ClaassenBucur2022_GESwithPAGs}. For algebraic equivalence, separate necessary and sufficient graphical conditions exist (see Section~\ref{sec:nec_suff_conds} for BAPs, or \Citep[Theorem~2]{VanOmmenMooij2017_AlgebraicEquivalence} for more general graphs), but no characterization that is simultaneously necessary and sufficient (except in MAGs, where it coincides with Markov equivalence). Such a characterization would be a step towards an analogue of CPDAGs and PAGs, which are graphs that represent entire equivalence classes. This would solve problems such as the one seen in Figure~\ref{fig:disconnected}, and would be the most suitable format for a causal discovery algorithm's output.

Other future work is to extend our algorithms beyond BAPs to more general graphs.

\section{Conclusion}\label{sec:conclusion}

We have introduced the first efficient algorithms for the tasks of determining whether a graph imposes a given algebraic constraint, whether the algebraic model of one graph is a submodel of another, and whether two graphs have the same algebraic model. We argue that for linear, possibly Gaussian models, algebraic equivalence is the most appropriate equivalence notion that causal discovery algorithms can use. We conjecture that algebraic equivalence can be related to nested Markov equivalence, which would also make our algorithms applicable to the discrete and nonparametric cases.

\begin{acknowledgements} %
  I want to thank all reviewers, whose careful reading and valuable suggestions substantially improved the presentation of this work.
\end{acknowledgements}

\bibliography{../caus}

\newpage

\onecolumn

\title{Efficiently Deciding Algebraic Equivalence of Bow-Free Acyclic Path Diagrams\\(Supplementary Material)}
\maketitle

\appendix

\section{More Details About Examples~\ref{ex:non_PD_primary} and~\ref{ex:non_I_primary}}\label{app:examples_details}

In this appendix, we provide evidence for the claims made in Examples~\ref{ex:non_PD_primary} and~\ref{ex:non_I_primary}, and include some further discussion.

\subsection{Example~\ref{ex:non_PD_primary}}

The constraint construction algorithm of \Citet{VanOmmenDrton2022_GraphicalConstraints} requires as input a sequence of sets $(Y_v)_v$ satisfying certain properties outlined by \citet{FoygelDraismaDrton2012_htc}. We use $Y_v  = \pa(v)$ for all $v \in V$. This choice is valid for all BAPs and is used throughout this paper when applicable.

The matrix $\Sigma$ in the example was found by first using a computer algebra package to compute the primary decomposition of the graphically represented ideal. This reveals that the ideal has multiple components: the component describing the model and fifteen spurious components. Most of the spurious components have a principal minor of $\Sigma$ as one of their generators, and thus describe sets of $\Sigma$'s on the boundary of the positive definite cone. One spurious component does allow $\Sigma$'s inside the positive definite cone:
\begin{equation*}
  \langle
    \sigma_{ae},
    \sigma_{be},
    \sigma_{ce},
    \begin{vmatrix} %
      \sigma_{bd} & \sigma_{bc}\\
      \sigma_{cd} & \sigma_{cc}
    \end{vmatrix},
    \begin{vmatrix} %
      \sigma_{aa} & \sigma_{ab} & 0\\
      \sigma_{ba} & \sigma_{bb} & \sigma_{bd}\\
      \sigma_{ca} & \sigma_{cb} & \sigma_{cd}
    \end{vmatrix},
    \begin{vmatrix} %
      \sigma_{aa} & \sigma_{ab} & 0\\
      \sigma_{ba} & \sigma_{bb} & \sigma_{bc}\\
      \sigma_{ca} & \sigma_{cb} & \sigma_{cc}
    \end{vmatrix}
  \rangle.
\end{equation*}
For the $\Sigma$ given in Example~\ref{ex:non_PD_primary}, all generators above are 0 --- for the first five generators, this can be seen by simply filling in the zero entries of $\Sigma$; for the final generator, the determinant equals $1 - \frac{9}{16} - \frac{9}{16} + \frac{2}{16} = 0$.

The HTC-identification algorithm requires taking the inverse of the $3 \times 3$ matrix that appears in the final generator. Spurious components of the graphically represented ideal may arise in places where such an inverse fails to exist, as is the case here.
The matrix resembles a principal minor of $\Sigma$, except that one of its entries has been replaced by a zero. If it had been a principal minor, then the HTC-identification algorithm would have been able to take its inverse for all positive definite $\Sigma$. While this ideal is not PD-primary, it does have the weaker property of being $I$-primary, because the matrix in question is invertible at $\Sigma = I$.

\subsection{Example~\ref{ex:non_I_primary}}\label{app:ex_non_I_primary}

This graph is not a BAP, so the choice $Y_v  = \pa(v)$ is not valid.
To establish that the graph is HTC-identifiable, we can choose $Y_a = \varnothing, Y_b = \set{a}, Y_c = \set{a,b}, Y_d = \set{a}, Y_e = \set{a, d}$. The results below are for the graphical ideal obtained using this choice of the $(Y_v)_v$ as input to the constraint construction algorithm; other choices are possible and lead to similar results.

As for Example~\ref{ex:non_PD_primary}, we computed the primary decomposition of the graphical ideal using a computer algebra package. We find that one of the spurious components is simply $\langle \sigma_{ac}, \sigma_{ad} \rangle$. This component admits the identity matrix, establishing that this ideal is not $I$-primary.

We will write $G'$ to refer to the graph in Figure~\ref{fig:nonprimary}(b).
While the graphically represented ideal fails to describe $\algM(G')$ accurately, an accurate description of the algebraic model can be obtained using the theory of \citet{FinkRajchgotSullivant2016_MatrixSchubertVarieties}.
If a bidirected edge $d \leftrightarrow e$ is added to $G'$, we obtain a new graph $G'^+$ that is algebraically equivalent to $G'$ \Citep[Theorem~2]{VanOmmenMooij2017_AlgebraicEquivalence}. In the terminology of \citet{FinkRajchgotSullivant2016_MatrixSchubertVarieties}, $G'^+$ is a \emph{generalized Markov chain}, for which they show that the vanishing minor constraints implied by t-separation correctly generate the ideal of $\algM(G'^+)$, and thus of $\algM(G')$. These generators are $\langle \lvert\Sigma_{ab,cd}\rvert, \lvert\Sigma_{ab,ce}\rvert, \lvert\Sigma_{ab,de}\rvert \rangle$. However, any graphically represented ideal of $G'$ has only two generators, which is a way to understand why the graphically represented ideal has problematic spurious components.
This also shows that we can test whether some $\algM(G)$ is contained in $\algM(G')$ by running Algorithm~\ref{alg:decide_constraint} on $G$ three times: once for each of the three generators listed above.

\section{Additional Proofs}\label{app:proofs}

\subsection{Proof of Theorem~\ref{thm:dimension_argument}}

\begin{proof}
  For each spurious component $K$ in the primary decomposition of $J$, $\algM(G) \cap V(K) \subsetneq \algM(G)$, as the identity matrix $I \in \algM(G)$ but $I \not\in V(K)$. Because $\algM(G)$ is an irreducible variety \citep{CoxLittleOShea2015} and the intersection $\algM(G) \cap V(K)$ is an algebraic variety, the latter, and hence $\cM(G) \cap V(K)$, must be of lower dimension than $\algM(G)$. $\cM(G) \cap V(J)$ is the union of a finite number of such intersections and of the non-spurious part $\cM(G) \cap \algM(G')$. It follows that $\cM(G) \cap V(J) \setminus \algM(G')$ is also of lower dimension than $\cM(G)$, which is of the same dimension as $\algM(G)$.
\end{proof}

\subsection{Proof of Theorem~\ref{thm:decide_inclusion}}

\begin{proof}
  First note that for each pair $\set{v,w}$ of nonadjacent nodes in $G'$, the value of $\tilde{\Omega}'_{vw}$ computed by the algorithm equals the evaluation of the graphically represented constraint of \Citet{VanOmmenDrton2022_GraphicalConstraints} at $\Sigma$. For the $(Y_v)_v$ that are needed as input to the constraint construction algorithm, we use $Y_v  = \pa_{G'}(v)$ for all $v \in V$: this choice is valid for all BAPs. Both computations follow the half-trek identification algorithm of \citet{FoygelDraismaDrton2012_htc}, with one exception: when $\Lambda_{\cdot,v}$ is computed, Cramer's rule is used to show that $\lvert \bA^{(v)} \rvert \cdot [I - \Lambda]_{\cdot,v} = [\lvert \bA^{(v)} \rvert, \lvert \bA^{(v)}_{w_1} \rvert, \ldots, \lvert \bA^{(v)}_{w_k} \rvert]$ for $\pa_{G'}(v) = \set{w_1,\ldots,w_k}$, but the $\lvert \bA^{(v)} \rvert$ is not divided out. %

  If $\algM(G) \subseteq \algM(G')$, then any $\Sigma \in \cM(G) \subseteq \algM(G)$ will satisfy any algebraic constraint that holds in $\algM(G')$. In particular, it will satisfy $\tilde{\Omega}'_{v,w} = 0$ for all $\set{v,w}$ nonadjacent in $G'$. %
  The algorithm will always return `true' in this case.

  For the case $\algM(G) \nsubseteq \algM(G')$,
  we will have to account for the possibility that the graphically represented ideal $J$ may have spurious components, so that $V(J) \supseteq \algM(G')$.
  As shown by \Citet{VanOmmenDrton2022_GraphicalConstraints}, for acyclic graphs, $\Sigma \in V(J) \setminus \algM(G')$ implies that for some $v \in V$, the polynomial $\lvert\bA^{(v)}\rvert$ evaluates to zero at $\Sigma$. %
  \Citeauthor{VanOmmenDrton2022_GraphicalConstraints} further show that if $G'$ is bow-free, $\lvert\bA^{(v)}\rvert$ evaluates to 1 at $\Sigma = I = \phi(\mathbf{0}, I)$ (i.e.~the graphically represented ideal is $I$-primary). Thus it is not the zero polynomial in terms of $(\Lambda, \Omega)$. %

  Having ruled out the possibility that $\tilde{\Omega}_{vw} \circ \phi \equiv 0$ for all $\set{v,w}$ nonadjacent due to $\algM(G)$ being contained in a spurious component of $V(J)$, we conclude that an $\tilde{\Omega}_{vw} \circ \phi$'s being identically zero must imply that $\algM(G) \subseteq \algM(G')$. Equivalently, $\algM(G) \nsubseteq \algM(G')$ implies that for some nonadjacent $\set{v,w}$, $\tilde{\Omega}_{v,w}$ is not the zero polynomial.

  Considered as polynomials over $\Sigma$, we see by induction that the entries of $\bM^{(v)}_{w,\cdot}$ in \solve{v} have degree at most $a_w$ if $w \in \htr_{G'}(v)$ and 0 otherwise; the entries of $\bA^{(v)}_{w,\cdot}$ and $\bb^{(v)}_w$ have degree at most $a_w+1$ if $w \in \htr_{G'}(v)$ and 1 otherwise; and the determinant $\lvert\bA^{(v)}\rvert$ and the entries of $\Lambda_{\cdot,v}$ have degree at most $a_v$. Then $\deg{\tilde{\Omega}'_{v,w}} \leq a_v + a_w + 1$.

  Now, similar to the dimension argument of Theorem~\ref{thm:dimension_argument} but using the Schwartz--Zippel lemma as in the proof of Theorem~\ref{thm:decide_constraint}, the probability of error is bounded by
  \begin{equation*} %
    P[ \tilde{\Omega}(\phi(\Lambda, \Omega))_{vw} = 0 \mid \tilde{\Omega}_{vw} \circ \phi \not\equiv 0 ]
    \leq \frac{1}{p} (2\ell_G + 1) (a_v + a_w + 1).
  \end{equation*}
  Because we do not know for which $\set{v,w}$ the constraint is not the zero polynomial, we take the maximum over all candidates.

  Interestingly, if the algorithm encounters an $\lvert\bA^{(v)}\rvert$ that evaluates to zero but also a nonzero $\tilde{\Omega}_{v,w}$ for $\set{v,w}$ nonadjacent, then it can and will report `false'. Thus this case does not contribute to the error probability.

  All operations outside \solve{} can clearly be performed in $O(n^\omega)$ time. Within \solve{}, $\htr_{G'}(v)$ can be computed by breadth-first search in $O(n^2)$, and $\bA^{(v)}$ and $\bb^{(v)}$ can be computed in $O(n^\omega)$. Write $k = \lvert \pa_{G'}(v) \rvert$. Computing $\tilde{\Lambda}'_{\cdot,v}$ in the final line involves the computation of $k+1$ determinants, namely the $k \times k$ minors of a $k \times (k+1)$ matrix. Like matrix multiplication, determinants can be computed in time $O(n^\omega)$ \citep{BunchHopcroft1974_TriangularFactorizationInversion},
  and we can use the technique of \citet{BaurStrassen1983_ComplexityPartialDerivatives} to compute all $k+1$ minors still in time $O(n^\omega)$ (though in our implementation, we used an approach based on Gaussian elimination that runs in time $O(n^3)$; see Appendix~\ref{app:implementation}).
  In the worse case, \solve{} is performed $n$ times, making the time complexity of Algorithm~\ref{alg:decide_inclusion} $O(n^{\omega + 1})$.
\end{proof}

\subsection{Proof of Lemma~\ref{lem:n_bound}}\label{app:proof_lem_n_bound}

\begin{proof}
  The hard part is bounding the degree of the algebraic constraint, over all possible BAPs $G'$. Assume the nodes of $G'$ are topologically ordered. We want to find numbers $a_v$ such that for any BAP $G'$, $\deg{\lvert \bA^{(v)}\rvert} \leq a_v$ for all $v$ for which the algorithm calls \texttt{solve}.

  First, $a_1 = 0$ since node 1 has no parents, and $a_2 = 1$ since node 2 may only have node 1 as a parent.

  For $v \geq 3$, we could have $\pa(v) = \set{1, 2, \ldots, v-1}$, in which case no half-treks exist from $v$ to any of these parents, and we would have $\deg{\lvert \bA^{(v)}\rvert} = v - 1$. By including a single bidirected edge between nodes 1 and $v$ taking $\pa(v) = \set{2, 3, \ldots, v-1}$, all such half-treks might exist, so $a_v = v - 2 + \sum_{i=2}^{v-1} a_i$. A direct expression is $a_v = 3 \cdot 2^{v-3} - 1$ (for $v \geq 3$).

  There must be a pair of nonadjacent nodes in $G'$ for it to impose an algebraic constraint. Let $s$ and $t$ be two nonadjacent nodes, with $s < t$. Then the bound $a_s$ is computed as above, but $\deg{\lvert \bA^{(t)}\rvert}$ will obey a tighter bound, because it must have fewer adjacencies to earlier nodes than used in the argument above. We want to establish an upper bound $a'_t$ to this degree. Assume $t \geq 4$. If $s \neq 1$, we still want a bidirected edge between nodes 1 and $t$, in which case we would get $a'_t = t - 3 + \sum_{i=2}^{t-1} a_i - a_s$. If $s = 1$, the bidirected edge would go between nodes 2 and $t$, and $a'_t = t - 3 + \sum_{i=2}^{t-1} a_i - a_2$.

  The bound on the degree of the algebraic constraint is $1 + a_s + a'_t$. Since the sequence $a_1, a_2, \ldots$ is increasing, this is maximized when $t = n$. If $n \geq 4$, all choices for $s \geq 2$ yield the same value, because $a_s$ is both subtracted and added; $s = 1$ yields one less because $a_2 = 1$ is subtracted and $a_1 = 0$ is added.

  For these choices of $s$ and $t$ and for $n \geq 4$, the degree bound becomes
  \begin{multline*}
    1 + a_s + a'_n
    = 1 + a_s + n - 3 + \sum_{i=2}^{n-1} a_i - a_s
    = n - 2 + (1 + \sum_{i=3}^{n-1} a_i)
    = \sum_{i=3}^{n-1}(3 \cdot 2^{i-3} - 1) + n - 1\\
    = 3\sum_{i=0}^{n-4} 2^i - (n-3) + n - 1
    = 3(2^{n-3} - 1) + 2
    = \frac{3}{8}2^n - 1.\qedhere%
  \end{multline*}
\end{proof}

\section{Implementation}\label{app:implementation}

The algorithms described in this paper are implemented in Python using the Galois library \citep{Hostetter2020_Galois} for computations over $\FF_p$. The experiments in Section~\ref{sec:experiments} were performed with Python 3.11, NumPy version 1.26.4, and Galois version 0.3.8, on a MacBook Pro (2.3 GHz Intel processor).

Algorithm~\ref{alg:decide_inclusion} requires the computation of all $n \times n$ minors of an $n \times (n+1)$ matrix, namely the matrix $\bA^{(v)}$ augmented with the column vector $\bb^{(v)}$. We use the following implementation to perform this computation in $O(n^3)$ time, i.e.~the same complexity as computing a single determinant using Gaussian elimination. First, applying Gaussian elimination to the augmented matrix allows us to find the determinant of $\bA^{(v)}$, as well as the minor obtained by omitting the second-to-last column. Then we imagine we flip the matrix left-to-right, so that the nonzero elements now reside in the top left triangle. Next we apply Gaussian elimination to the bottom two rows of this flipped matrix; ignoring the third column and using that the other columns form a permuted triangular matrix, we compute the third minor. Each subsequent minor is computed in this fashion, with the final minor requiring a Gaussian elimination of the entire flipped matrix. The successive Gaussian eliminations on the flipped matrix benefit from the fact that the previous iteration already put the matrix in close-to-triangular form, and that most of the rows they operate on are known to be largely zeros, so that they together require only roughly half as many operations as the initial Gaussian elimination.

\end{document}